\documentclass[11pt]{article}

\usepackage{amssymb,amsmath,amsthm,amsbsy,amsfonts}
\usepackage{mathtools}
\usepackage[letterpaper,margin=1in]{geometry}
\usepackage[T1]{fontenc}
\usepackage{libertinus}
\usepackage{microtype}
\usepackage{bm}
\usepackage{graphicx}
\usepackage{float}        
\usepackage{booktabs}
\usepackage[font={small},labelfont={bf,up}]{caption}
\usepackage{algorithm}
\usepackage{algpseudocode}
\usepackage{thmtools,thm-restate}
\makeatletter
\newcommand{\algspacing}{\setlength{\@tempdima}{0pt}\renewcommand{\baselinestretch}{1.25}\selectfont}
\makeatother
\algrenewcommand\algorithmicrequire{\textbf{Input:}}
\algnewcommand{\Phase}[1]{\Statex\textbf{#1}}
\usepackage{enumitem}
\usepackage{titletoc}
\usepackage{xcolor}
\usepackage[colorlinks,linkcolor=blue,citecolor=blue,urlcolor=blue]{hyperref}
\usepackage[nameinlink]{cleveref}
\usepackage[firstinits=true,backend=bibtex,style=numeric-comp,citestyle=numeric-comp,sorting=nyt,url=false,arxiv=false,isbn=false,doi=false]{biblatex}
\graphicspath{{figs/}}

\renewcommand{\vec}{\operatorname{vec}}

\newcommand{\R}{\mathbb{R}}

\DeclareMathOperator{\diag}{diag}
\DeclareMathOperator{\Tr}{Tr}
\DeclareMathOperator{\EMA}{EMA}
\DeclareMathOperator*{\argmin}{arg\,min}

\newcommand{\ip}[1]{\langle #1 \rangle}
\newcommand{\DW}{\Delta W}
\DeclareMathOperator{\msign}{msign}          

\newcommand{\norm}[1]{\| #1\|}
\newcommand{\methodname}{PoLoRA\xspace}
\usepackage{xspace}
\theoremstyle{plain}
\newtheorem{lemma}{Lemma}
\newtheorem{proposition}{Proposition}

\theoremstyle{definition}

\theoremstyle{remark}
\newtheorem{remark}{Remark}
\theoremstyle{definition}

\title{\methodname: A Preconditioned Orthogonalized LoRA Optimizer}
\author{
Nikhil Ghosh\footnote{Corresponding author: nghosh@flatironinstitute.org} \hspace{1cm}
Tetiana Parshakova \hspace{1cm}
Robert M. Gower\\[0.5em]
Center for Computational Mathematics, Flatiron Institute\\
New York, NY
}
\date{}

\begin{document}
\maketitle

\begin{abstract}
Low-rank adaptation (LoRA) makes finetuning large language models cheaper by adding to each weight matrix a trainable low-rank update parameterized as the product of two matrices. These matrices are usually trained with Adam, which treats them as a single flat vector of parameters and ignores both the matrix and product structure of LoRA. Applying a matrix-aware optimizer such as Muon to each factor does not consistently improve over Adam, and neither do the product-aware Muon variants proposed in concurrent works. To realize consistent gains, we introduce PoLoRA, a Preconditioned Orthogonalized LoRA optimizer built from three ingredients: a product-aware spectral update direction, curvature preconditioning derived from controlling the per-sample loss change, and a magnitude rule that controls the sizes of both the factor and merged updates. We evaluate PoLoRA on instruction-tuning datasets for code and math across models from 1B to 8B parameters, and find that it reaches the final held-out loss achieved by tuned Adam in 1.2--1.7 times fewer steps, while adding at most 3\% per-step overhead. Compared to Adam, PoLoRA is also less sensitive to the learning rate, and its optimal learning rate is stable across ranks.
\end{abstract}

\section{Introduction}
\label{sec:intro}

Deployment of large-scale pretrained models increasingly depends on adapting a
general model to many specialized tasks~\cite{foundation_models}. Finetuning a separate full copy for each task is expensive, because training must hold gradients and optimizer state for the whole model, and deploying each task means storing and serving multiple full-sized models. Low-rank adaptation (LoRA~\cite{lora}) avoids both issues by freezing the base weight $W_0 \in \R^{d_{\mathrm{out}} \times d_{\mathrm{in}}}$ and adding a small low-rank update $\Delta W$,
\begin{equation}
W = W_0 + \DW, \qquad \DW = \frac{\alpha}{r}\,BA, \qquad
B \in \mathbb{R}^{d_{\mathrm{out}} \times r}, \quad
A \in \mathbb{R}^{r \times d_{\mathrm{in}}},
\label{eq:lora}
\end{equation}
to each layer, with $\alpha$ a scaling parameter and $r$ the rank. With only the adapter $(B, A)$ trained, finetuning can run on nearly the same hardware layouts as inference, and the shared base lets one server serve many adapters at once~\cite{punica,bitdelta,schulman2025lora}.

The standard optimization recipe for LoRA applies Adam~\cite{adam} to the two factors $A$ and $B$, treating all the parameters as a single flat vector and ignoring any matrix structure. In language-model pretraining, matrix-aware optimizers such as Muon~\cite{muon}, Scion~\cite{pmlr-v267-pethick25a}, and Shampoo~\cite{shampoo} treat each weight as a matrix and outperform Adam~\cite{liu2025muon,shampoo,benchmark_practices,optimizer_benchmarking,qiu2026hyperparameter}. Muon in particular and variants such as NorMuon~\cite{Normuon} have set records for training small language models~\cite{nanogpt,nanochat} and have been adopted in recent large-scale training runs~\cite{Deepseekv4,GLM}.

It is natural to ask whether similar gains can carry over to LoRA. We found, however, that applying Muon to each factor does not improve over Adam. Part of the problem is structural: the model sees the \emph{product} $BA$, while the optimizer acts on the \emph{factors} $A$ and $B$ individually. Several concurrent works~\cite{imuon,loramuon,tilde_csd} correct this with variants of a product-aware Muon step we refer to as \emph{Product Muon} (\Cref{sec:product-muon}).
In our experiments, the official iMuon implementation~\cite{imuon}, one of these variants, is also unable to consistently improve over Adam (\Cref{fig:hero}).

\begin{figure}[H]
  \centering
  \includegraphics[width=\linewidth]{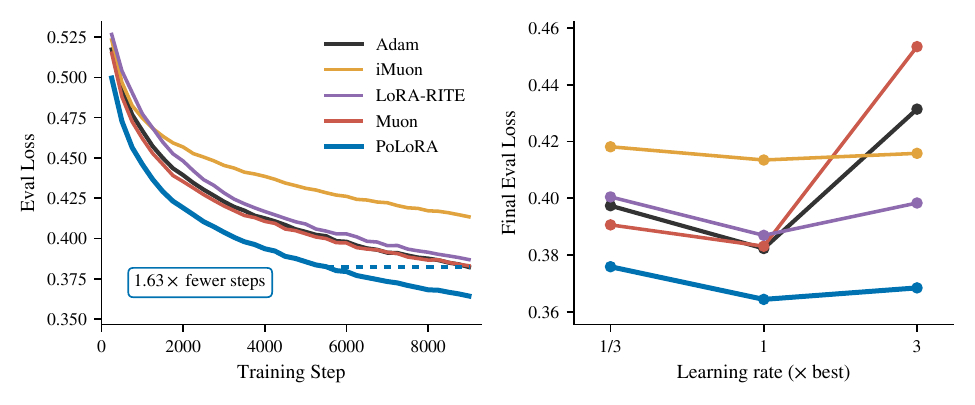}
  \caption{\textbf{\methodname{} outperforms baselines during training.} Llama-3.2-1B finetuned on the math dataset at rank $r = 256$. \textbf{Left:} evaluation loss across training steps; \methodname{} outperforms Adam, iMuon~\cite{imuon}, Muon, and LoRA-RITE~\cite{lorarite} and reaches Adam's final loss in $1.63\times$ fewer steps. \textbf{Right:} final evaluation loss versus learning rate, in multiples of each optimizer's best learning rate. Every optimum is bracketed, and \methodname{} attains the lowest loss.}
  \label{fig:hero}
\end{figure}

To obtain a consistent speedup over Adam, we introduce \methodname{}, a steepest descent method whose constraint is motivated by controlling the change in the per-sample loss, and whose solution is an orthogonalized update preconditioned by a curvature estimate. We then develop a product-aware version of this update for LoRA, imposing the constraint on the merged update rather than on each factor separately. A separate magnitude rule sets the sizes of the factor updates, which the Product Muon step leaves uncontrolled, while retaining its bound on the spectral norm of the merged update.

Across base models, datasets, and adapter ranks, \methodname{} consistently improves training efficiency over Adam (\Cref{tab:breadth,tab:rank}), and it outperforms matrix-aware LoRA baselines on Llama-3.2-1B finetuned on math (\Cref{fig:hero}). It reaches the final held-out loss of tuned Adam in $1.2$--$1.7\times$ fewer steps, while adding at most 3\% per-step overhead. Removing the curvature preconditioning and the magnitude rule in turn reduces \methodname{} to Product Muon, and each component accounts for about half of the speedup gap. Finally, we observe that \methodname{} is easier to tune than Adam: its optimal learning rate is stable across ranks, and performance is less sensitive to the learning rate.

Our open-source implementation is available at:
\begin{quote}\centering
\url{https://github.com/nikhilgsh/polora}
\end{quote}

The rest of the paper proceeds as follows. In \Cref{sec:method} we motivate and derive \methodname{}.
In \Cref{sec:exp} we evaluate \methodname{}, comparing it in terms of step and wall-clock speedups over Adam and other baselines.
In \Cref{sec:exp-ablation} we separate the roles of curvature and magnitude control.
Finally, in \Cref{sec:related} we place \methodname{} in the context of prior and concurrent work on hyperparameter rescaling for LoRA, matrix-aware updates, product-invariant LoRA optimizers, and magnitude control for low-rank products.


\section{\texorpdfstring{\methodname{}}{PoLoRA}: Preconditioned Orthogonalized LoRA}

\label{sec:method}
To derive the \methodname{} update, we first recall in \Cref{sec:muon} how Muon arises from constrained steepest descent~\cite{crawshaw2026exploration}. Its update is computed by a linear minimization oracle (LMO)~\cite{pmlr-v267-pethick25a}, which minimizes the linearized loss subject to a spectral-norm bound. In \Cref{sec:product-muon} we make this LMO product-aware, which yields Product Muon. In \Cref{sec:sample-loss-lmo} we obtain the \methodname{} update direction by replacing the spectral-norm constraint with a bound on the change in the per-sample loss and making the resulting LMO product-aware for LoRA. The LMO provides the direction but not the size of the update, which we set by a separate rescaling that controls both the per-factor and merged-weight updates. All proofs are in \Cref{app:proofs}.

Throughout, whenever a vector or matrix is divided by its norm, we set $X/\norm{X}:=0$ at $X=0$.

\subsection{Muon}\label{sec:muon}
Muon is based on a spectral LMO~\cite{bernstein2025modular,pmlr-v38-carlson15,carlson2015preconditioned}.
For a weight $W \in \mathbb{R}^{d_{\mathrm{out}} \times d_{\mathrm{in}}}$ with gradient $G = \nabla_W \mathcal{L}(W)$, this LMO minimizes the local linearization of the loss subject to a ball constraint,
\begin{equation}
 \begin{array}{ll}
  \underset{\Delta W}{\mathrm{minimize}} &
  \ip{G, \Delta W} \\[2pt]
  \mbox{subject to} &
  \displaystyle
  \norm{\Delta W}_2
  \le \eta,
  \end{array}
  \label{eq:lmo}
\end{equation}
where $\|\cdot\|_2$ is the spectral norm 
\[
\|W\|_2 := \max_{\|x\|_2 =1} \|W x \|_2.
\]
Bounding the spectral norm is appealing because it controls the largest amount by which the update can change the layer's output over all unit-norm inputs~\cite{yang2023spectral}.
A solution to~\eqref{eq:lmo} is given by the matrix sign of the gradient~\cite{bernstein2025modular}
\[
    \Delta W = -\eta \msign(G),
\]
where $\msign(G)=UV^\top$ and $G=U\Sigma V^\top$ is the reduced SVD (see~\Cref{lem:muon-lmo}). 

In practice, Muon replaces the gradient with its momentum, an exponential
moving average (EMA) of past gradients, and uses a polynomial approximation
to the polar factor~\cite{polarexpress,gram_ns} (see \Cref{app:gramns}). Next we show how to adapt this LMO to be aware of the product structure in LoRA~\eqref{eq:lora}.

\subsection{Product Muon}\label{sec:product-muon}
Throughout, we set the LoRA scaling $\alpha=r$~\cite{loralearnsless, raschka}, so the merged weight is $W = W_0 + BA$. Consider a LoRA step that updates the two factors $A$ and $B$ as follows
\[
A \leftarrow A + \Delta A,
\qquad
B \leftarrow B + \Delta B.
\]
Applying Muon naively would impose a spectral bound separately on $\Delta A$ and $\Delta B$, but what matters to the layer is the effective update of the product $BA$, which is not controlled solely by $\Delta A$ and $\Delta B$. Indeed, the update to the merged weight is, to first order,
\begin{equation} \label{eq:DeltaW-lin}
\Delta W = (B + \Delta B)(A + \Delta A) - BA \approx B \Delta A + \Delta B A,
\end{equation}
from which we see that the spectral norm bound should be applied to this linearized product update. Substituting~\eqref{eq:DeltaW-lin} into the spectral LMO~\eqref{eq:lmo} gives the spectral constraint
\[
\norm{B \Delta A + \Delta B A}_2
\le \eta.
\]
The constraint couples the two unknowns $\Delta A$ and $\Delta B$, but we can decouple them with the upper bound using norm subadditivity:
\[
\norm{B \Delta A + \Delta B A}_2
\le
\norm{B \Delta A}_2 + \norm{\Delta B A}_2.
\]
Splitting the budget evenly between the two terms preserves the merged bound and gives the \emph{Product Muon} LMO
\begin{equation}
  \begin{aligned}
  \Delta A &\in \argmin_{\Delta A}~ \ip{G_A, \Delta A}
  \quad \textnormal{subject to}\quad
  \norm{B \Delta A}_2\le\eta/2, \\
  \Delta B &\in \argmin_{\Delta B}~ \ip{G_B, \Delta B}
  \quad \textnormal{subject to}\quad
  \norm{\Delta B A}_2\le\eta/2,
  \end{aligned}
  \label{eq:decoupled-lora-lmo}
\end{equation}
where the factor gradients are
\begin{equation}\label{e-grad-wrt-factors}
    G_A := \nabla_A \mathcal{L}(W_0+BA) = B^\top G,
    \qquad
    G_B := \nabla_B \mathcal{L}(W_0+BA) = G A^\top.
\end{equation}
By \Cref{lem:whitened-spectral-lmo}, a solution to this LMO is
\begin{equation}\label{eq:product-muon-step}
\begin{aligned}
    \Delta A &= -\tfrac{\eta}{2}\,(B^\top B)^{-1/2}\,\msign\!\big((B^\top B)^{-1/2} G_A\big), \\
    \Delta B &= -\tfrac{\eta}{2}\,\msign\!\big(G_B (A A^\top)^{-1/2}\big)\,(A A^\top)^{-1/2}.
\end{aligned}
\end{equation}
As in Muon, these factor updates are computed using momentum estimates of the gradients.
\subsection{Per-Sample Loss Control}\label{sec:sample-loss-lmo}
We now derive the base LMO that will be the starting point of \methodname{}.
Let $\{x_1, \ldots, x_n\}$ be the input activations to the layer that $W$ encodes, and let $G_i = \nabla \ell_{x_i}(W)$ be the gradient of the loss on activation $x_i$. Abusing notation slightly, we write $\ell_{x_i}$ for the loss on a single activation rather than a full input.

The justification for using the spectral norm in Muon is that it is conservative, since the constraint $\norm{\Delta W}_2 \le \eta$ is equivalent to bounding the change in the layer's output,
\[\norm{\Delta W x}_2 \leq \eta \quad \text{for all } \norm{x}_2 \leq 1.\]
What really matters in the end, however, is the final loss. Thus we adapt this conservative approach to instead ensure that no single activation results in a large change in the \emph{loss}, that is, we want
\begin{equation}\label{eq:loss-diff}
     |\ell_{x_i}(W +\Delta W) - \ell_{x_i} (W)|\leq \tau, \qquad  i=1,\ldots, n.
\end{equation} 
Since the loss is nonlinear and nonconvex, we replace it with a local linearization,
\[  
\ell_{x_i}(W +\Delta W) -\ell_{x_i} (W)  \approx \ip{G_i, \Delta W}. 
\]
Imposing the linearized per-sample bounds~\eqref{eq:loss-diff} in place of the spectral-norm constraint in~\eqref{eq:lmo} gives
\begin{equation}
 \begin{array}{ll}
  \underset{\Delta W}{\mathrm{minimize}} &
  \ip{G, \Delta W} \\
  \mbox{subject to} &
  \displaystyle  
|\ip{G_i, \Delta W}|
  \le \tau, \qquad i=1,\ldots, n.
  \end{array}
  \label{eq:sample-lmo}
\end{equation}
This per-sample LMO is well-posed (\Cref{prop:wellposed}), but its constraints are not directly usable, since we lack efficient access to the per-sample gradients $G_i$. We therefore define a centrally symmetric \emph{outergradient} set $\mathcal{G}$ such that $G_i \in \mathcal{G}$, and then use the inequality
\begin{equation} \label{eq:outgrad1}
    \max_{i=1,\ldots, n}  |\ip{G_i, \Delta W}|
    \leq   \max_{\widetilde{G} \in \mathcal{G}} \; \ip{\widetilde{G}, \Delta W},
\end{equation} 
to impose  that the right-hand side of \eqref{eq:outgrad1} 
is less than $\tau$. 
To build $\mathcal{G}$ we collect some standard properties of the gradient matrices in the following lemma.
\begin{restatable}[Properties of gradients]{lemma}{gradprop} 
  \label{lem:gradprop}
    Let $g_i=\vec(G_i) \in \R^{d_{\mathrm{out}} d_{\mathrm{in}}}$ for $i=1, \ldots, n$. Let $\Sigma := \frac{1}{n} \sum_{i=1}^n g_i g_i^\top$ and let $\Sigma^\dagger$ denote its Moore-Penrose pseudoinverse. The gradients satisfy the following properties:
    \begin{enumerate}
        \item {\bf Leverage bound:} $ \ip{\Sigma^\dagger g_i, g_i} \leq n$, \\[-0.5cm]
        \item {\bf Rank one:} $G_i = b_i x_i^\top$  for some $x_i \in \R^{d_{\mathrm{in}}}$ and $b_i \in \R^{d_{\mathrm{out}}}$.
        \end{enumerate}
\end{restatable}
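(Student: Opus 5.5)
The two properties are independent, and I will prove them separately. The leverage bound is a linear-algebra fact about sums of rank-one PSD matrices. The rank-one property comes from the chain rule applied to a single activation passing through the layer.

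\textbf{Leverage bound.} Fix $i$ and write $n\Sigma = g_i g_i^\top + M$, where $M := \sum_{j\neq i} g_j g_j^\top \succeq 0$. If $g_i = 0$ there is nothing to prove. Otherwise $g_i \in \operatorname{range}(\Sigma)$, since the range of $\Sigma$ is the span of the $g_j$. Then $\Sigma^\dagger$ acts as a true inverse on that range, and
\[
\ip{\Sigma^\dagger g_i, g_i} = \sup_{v}\; \frac{\ip{v,g_i}^2}{\ip{\Sigma v, v}},
\]
where the supremum is over $v$ in the range of $\Sigma$ with $\ip{\Sigma v,v}>0$. This is the variational (Cauchy--Schwarz in the $\Sigma$-inner product) characterization of $g^\top\Sigma^\dagger g$ for $g$ in the range. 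For any such $v$,
\[
\ip{\Sigma v,v} = \tfrac1n\big(\ip{v,g_i}^2 + \ip{Mv,v}\big) \ge \tfrac1n \ip{v,g_i}^2,
\]
so each ratio is at most $n$, and the supremum is at most $n$ as well. Alternatively, one can restrict to the range and note that $g_i g_i^\top \preceq n\Sigma$ implies $g_i^\top \Sigma^\dagger g_i \le n$, via the Schur complement of the PSD block matrix $\begin{pmatrix} n\Sigma & g_i\\ g_i^\top & 1\end{pmatrix}$. The only delicate point is the pseudoinverse, and the range-membership remark above handles it.

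\textbf{Rank one.} The layer computes $y_i = W x_i$, and the per-activation loss has the form $\ell_{x_i}(W) = \phi_i(W x_i)$ for some differentiable $\phi_i$ describing the rest of the network on that input. Differentiating gives $\ip{G_i, \Delta W} = \ip{\nabla\phi_i(Wx_i), \Delta W x_i} = \ip{b_i x_i^\top, \Delta W}$ with $b_i := \nabla \phi_i(W x_i)$. Hence $G_i = b_i x_i^\top$. The main thing to be careful about is stating this modeling assumption: the loss depends on $W$ only through the product $W x_i$ for that single activation. This is exactly the "abuse of notation" the paper introduces just before the lemma.
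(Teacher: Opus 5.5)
Your proof is correct. The rank-one part is the same chain-rule computation the paper uses, but for the leverage bound you take a different route.

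The paper stacks the gradients as columns of $M=[g_1,\ldots,g_n]$, so that $\ip{\Sigma^\dagger g_i,g_i}=n\,e_i^\top M^\top(MM^\top)^\dagger Me_i$. It then concludes because $M^\top(MM^\top)^\dagger M$ is the orthogonal projector onto the range of $M^\top$, whose diagonal entries are at most $1$; this is the classical hat-matrix leverage bound. You instead use only the Loewner domination $g_ig_i^\top\preceq n\Sigma$, combined with the variational (Cauchy--Schwarz) or Schur-complement characterization of $g^\top\Sigma^\dagger g$ for $g$ in the range of $\Sigma$.

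The paper's argument is a one-liner once the pseudoinverse projector identity is granted, and it never has to discuss range membership. Yours needs that remark explicitly. In exchange, it uses nothing about $\Sigma$ beyond $\Sigma\succeq\frac1n g_ig_i^\top$, which by itself forces $g_i\in\operatorname{range}(\Sigma)$. So your argument carries over unchanged to weighted averages such as the EMA second moment, where the bound becomes the reciprocal of the weight on $g_i$. It also covers a damped $\Sigma+\delta I$, where the projector identity no longer applies directly.
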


\noindent The above properties motivate the choice of the following outergradient set
\begin{equation} 
\label{eq:outgrad} 
\mathcal{G}(\Sigma) := \left \{ \widetilde{G} = b x^\top ~\bigg |~  
b\in\mathbb{R}^{d_{\mathrm{out}}},
    \ x\in\mathbb{R}^{d_{\mathrm{in}}}, \
\ip{\Sigma^\dagger \vec(\widetilde{G}), \vec(\widetilde{G})} \leq n \right \}.
\end{equation}
Since $G_i \in \mathcal{G}$ by~\Cref{lem:gradprop}, we can use this $\mathcal{G}$ in our upper bound~\eqref{eq:outgrad1}.
The last issue is that $\Sigma$ is too large to store or use directly, so we replace it with a structured approximation.

\paragraph{Kronecker approximation.}
Following~\cite{shampoo,kfac}, we approximate $\Sigma$ by a Kronecker product of symmetric positive definite matrices $P\in \R^{d_{\mathrm{out}} \times d_{\mathrm{out}}}$ and $Q \in\R^{d_{\mathrm{in}} \times d_{\mathrm{in}}}$,
\begin{equation} \label{eq:pq-factor}
     \Sigma  \approx Q \otimes P.
\end{equation}
We treat the \emph{preconditioners} $P$ and $Q$ as given for now, deferring their update rule to \Cref{app:matnormal}.

Using the factorization above in the outergradient set~\eqref{eq:outgrad} and replacing the per-sample constraints in~\eqref{eq:sample-lmo} with the sufficient condition~\eqref{eq:outgrad1}, we arrive at the following LMO
\begin{equation}
 \begin{array}{ll}
  \underset{\Delta W}{\mathrm{minimize}} &
  \ip{G, \Delta W} \\[2pt]
\mbox{subject to} &
  \displaystyle  
\max_{\widetilde{G} \in \mathcal{G}(Q \otimes P)} \ip{\widetilde{G}, \Delta W} 
  \le \tau.
  \end{array}
  \label{eq:polorafull-hard} 
\end{equation}
By~\Cref{lem:pseudogradient-spectral}, the constraint in~\eqref{eq:polorafull-hard} satisfies the identity
\[
\max_{\widetilde{G} \in \mathcal{G}(Q \otimes P)}\ip{\widetilde{G}, \Delta W}
= \sqrt{n} \| P^{1/2} \Delta W Q^{1/2}\|_2,
\]
so that \eqref{eq:polorafull-hard} becomes
\begin{equation}
\begin{array}{ll}
\underset{\Delta W}{\mathrm{minimize}} &
\ip{G, \Delta W} \\[2pt]
\mathrm{subject~ to} &
\displaystyle  
\|P^{1/2} \Delta W Q^{1/2}\|_2 \leq  \tau,
\end{array}
\label{eq:polorafull}
\end{equation}
where we have absorbed the factor $1 / \sqrt{n}$ into $\tau$. Note that the value of $\tau$ does not matter, since we use the LMO only for the direction and rescale the update later. By \Cref{lem:whitened-spectral-lmo}, the solution to the LMO in~\eqref{eq:polorafull} is
\begin{equation}\label{eq:polorafull-update}
    \Delta W =  - \tau P^{-1/2} \msign(P^{-1/2} G Q^{-1/2}) Q^{-1/2}.
\end{equation}

\paragraph{Direction.}
So far we have described an update for the weight matrix $W \in \R^{d_{\mathrm{out}} \times d_{\mathrm{in}}}$. We now adapt it to LoRA~\eqref{eq:lora}, deriving the \emph{direction} first and then setting the \emph{magnitude} of the factor updates. Replacing $\Delta W$ in \eqref{eq:polorafull} with its linearized product update~\eqref{eq:DeltaW-lin} gives the LMO
\begin{equation}
    \begin{array}{ll}
    \underset{\Delta A,\Delta B}{\mathrm{minimize}} &
    \ip{G, B\Delta A+\Delta B A} \\[2pt]
    \mbox{subject to} &
    \norm{P^{1/2}(B\Delta A+\Delta B A)Q^{1/2}}_2
    \le \tau.
    \end{array}
    \label{eq:joint-direction-lmo}
  \end{equation}
Just as for Product Muon in~\Cref{sec:product-muon}, subadditivity of the spectral norm splits the coupled constraint in~\eqref{eq:joint-direction-lmo} into one bound per factor. With the factor gradients~\eqref{e-grad-wrt-factors}, the product LMO then decouples into two factorwise LMOs
\begin{equation}
\setlength{\arraycolsep}{2pt}
\begin{array}{@{}c@{\qquad}c@{}}
\begin{array}{ll}
\underset{\Delta A}{\mathrm{minimize}} &
\ip{G_A,\Delta A} \\[2pt]
\mbox{subject to} &
\norm{P^{1/2}B\Delta A Q^{1/2}}_2 \le \tau ,
\end{array}
&
\begin{array}{ll}
\underset{\Delta B}{\mathrm{minimize}} &
\ip{G_B,\Delta B} \\[2pt]
\mbox{subject to} &
\norm{P^{1/2}\Delta B A Q^{1/2}}_2 \le \tau .
\end{array}
\end{array}
\label{eq:decoupled-direction-lmo}
\end{equation}
Define the $r\times r$ matrices
\begin{equation}
C_B = B^\top P B,
\qquad
C_A = A Q A^\top .
\label{eq:ours-closure}
\end{equation}
By definition of the spectral norm, and using the definitions in~\eqref{eq:ours-closure}, the left-hand sides of the two constraints in \eqref{eq:decoupled-direction-lmo} can be rewritten as
\begin{equation} \label{eq:collapse-constraint}
\norm{P^{1/2}B\Delta A Q^{1/2}}_2=\norm{C_B^{1/2}\Delta A Q^{1/2}}_2,
\qquad
\norm{P^{1/2}\Delta B A Q^{1/2}}_2=\norm{P^{1/2}\Delta B C_A^{1/2}}_2 .
\end{equation}
Since substituting~\eqref{eq:collapse-constraint} into~\eqref{eq:decoupled-direction-lmo} makes each factorwise LMO an instance of \Cref{lem:whitened-spectral-lmo}, the two minimizers are $\Delta A = -\tau D_A$ and $\Delta B = -\tau D_B$, where
\begin{equation}
  \begin{aligned}
  D_A &=
  C_B^{-1/2}\,
  \msign\!\left(C_B^{-1/2}\,G_A\,Q^{-1/2}\right)
  \,Q^{-1/2},\\
  D_B &=
  P^{-1/2}\,
  \msign\!\left(P^{-1/2}\,G_B\,C_A^{-1/2}\right)
  \,C_A^{-1/2}.
  \end{aligned}
  \label{eq:factor-directions}
\end{equation}
give the factor update directions.

\paragraph{Magnitude.}
Having derived the direction, we now set the magnitude directly rather than through the budget $\tau$, controlling the size of the factor updates in spectral norm. Treating $A$ and $B$ symmetrically, we use a single update size $\rho$ for both factors, and set $\norm{\Delta A}_2=\norm{\Delta B}_2=\rho$. Furthermore, like Product Muon, we ensure that the linearized merged update \eqref{eq:DeltaW-lin} has spectral norm bounded by the learning rate $\eta$. By the triangle inequality and submultiplicativity,
\begin{equation}
\norm{\Delta W}_2
\approx
\norm{B\Delta A+\Delta B A}_2
\le
\norm{B}_2\norm{\Delta A}_2
+
\norm{A}_2\norm{\Delta B}_2,
\label{eq:ours-prodcap}
\end{equation}
and so setting
\begin{equation}
\rho = \frac{\eta}{\norm{A}_2+\norm{B}_2}
\label{eq:ours-rho}
\end{equation}
ensures $\norm{\Delta W}_2\leq \eta$. As a result, using the directions $D_A$ and $D_B$ from \eqref{eq:factor-directions} gives the updates
\begin{equation}
\Delta A = -\rho\,\frac{D_A}{\norm{D_A}_2},
\qquad
\Delta B = -\rho\,\frac{D_B}{\norm{D_B}_2}.
\label{eq:polora-update}
\end{equation}

The update size $\rho$ in \eqref{eq:ours-rho} is not invariant to the rescaling $(A,B)\mapsto(cA,c^{-1}B)$, which changes $\norm{A}_2$ and $\norm{B}_2$ while preserving $BA$. Among such rescalings $\rho$ is largest when $\norm{A}_2=\norm{B}_2$, and in our runs we observe that $\norm{B}_2/\norm{A}_2$ settles near $1$ (\Cref{app:gauge}).
\paragraph{Optimizer step.}
\methodname{}, summarized in \Cref{alg:ours}, makes the update~\eqref{eq:polora-update} practical with the following ingredients:
\begin{itemize}[leftmargin=1.5em,itemsep=1pt,topsep=2pt]
  \item \emph{Averaging.} Momentum with look-ahead replaces each factor gradient with an EMA estimate of its full-batch counterpart (line~\ref{ln:mom}), and a second EMA accumulates the vectors $p,q$ that form the preconditioners (lines~\ref{ln:qmom} and~\ref{ln:pmom}; \Cref{app:matnormal}).  \item \emph{Diagonal curvature.} To reduce overhead, we take $P$ and $Q$ diagonal, as in Adafactor~\cite{adafactor} (line~\ref{ln:pqnorm}), and fit them to second moments of the factor gradients by adapting the KL-Shampoo update~\cite{klshampoo} to the diagonal case (lines~\ref{ln:qmom} and~\ref{ln:pmom}; \Cref{app:matnormal}).
  \item \emph{Normalization.} The Kronecker product in~\eqref{eq:pq-factor} is invariant to the rescaling $(P,Q)\mapsto(aP,a^{-1}Q)$. To make the EMA updates for $P$ and $Q$ invariant to this rescaling as well, we normalize each factor so that its largest entry is one (line~\ref{ln:pqnorm}; \Cref{app:matnormal}).
  \item \emph{Fast numerical subroutines.} We estimate the spectral norms in lines~\ref{ln:rho} and~\ref{ln:spectralnormupdate} by power iteration (\Cref{app:smax}). We use Gram Newton--Schulz iterations with PolarExpress coefficients~\cite{polarexpress} to compute the matrix sign and inverse square roots in lines~\ref{ln:Aspectral} and~\ref{ln:Bspectral} (\Cref{app:gramns}).
\end{itemize}

\begin{algorithm}[t]
\caption{\methodname{} step for one LoRA pair $(A,B)$}
\label{alg:ours}
\begin{algorithmic}[1]
\Statex \textbf{Hyperparameters:} learning rate $\eta$, momentum decay $\beta_1$, curvature decay $\beta_2$, numerical-stability constant $\varepsilon$
\Statex \textbf{Initialize:} $M_A, M_B \gets 0$; \quad $p, q \gets \varepsilon\mathbf{1}$
\Statex
\Phase{Update to Momentum Gradients}
\State $\begin{aligned}[t]
  M_A &\gets \beta_1 M_A+(1-\beta_1)G_A, &\quad M_B &\gets \beta_1 M_B+(1-\beta_1)G_B\\[-4pt]
  \widehat M_A &\gets \beta_1 M_A+(1-\beta_1)G_A, &\quad \widehat M_B &\gets \beta_1 M_B+(1-\beta_1)G_B
\end{aligned}$  \label{ln:mom}\Comment{buffer + look-ahead}
\Statex
\Phase{Computing the Direction}
\State $\begin{aligned}[t]
  &P \gets \diag\!\big(p/\norm{p}_\infty\big), &&\qquad Q \gets \diag\!\big(q/\norm{q}_\infty\big)\\[-4pt]
  &C_B \gets B^\top P\, B, &&\qquad C_A \gets A\,Q\, A^\top
\end{aligned}$  \label{ln:pqnorm}
\State $D_A\gets C_B^{-1/2}\,\msign\big(C_B^{-1/2}\,\widehat M_A\,Q^{-1/2}\big)\,Q^{-1/2}$ \Comment{preconditioned polar step~\eqref{eq:factor-directions}} \label{ln:Aspectral}
\State $D_B\gets P^{-1/2}\,\msign\big(P^{-1/2}\,\widehat M_B\,C_A^{-1/2}\big)\,C_A^{-1/2}$  \label{ln:Bspectral}
\Statex
\Phase{Balancing Magnitude}
\State $\rho\gets\eta/(\norm{A}_2+\norm{B}_2)$ \Comment{spectral-norm control~\eqref{eq:ours-rho}} \label{ln:rho}
\State $A\gets A-\rho\,D_A/\max(\norm{D_A}_2,\varepsilon), \qquad B\gets B-\rho\,D_B/\max(\norm{D_B}_2,\varepsilon)$ \label{ln:spectralnormupdate}
\Statex
\Phase{Update to Preconditioners}
\State $q\gets\beta_2\, q+(1-\beta_2)\diag\big(G_A^\top C_B^{-1} G_A\big)/r$ \Comment{coupled estimator~\eqref{eq:ours-klcoupling}}  \label{ln:qmom}
\State $p\gets\beta_2\, p+(1-\beta_2)\diag\big(G_B\, C_A^{-1} G_B^\top\big)/r$  \label{ln:pmom}
\end{algorithmic}
\end{algorithm}
\section{Experiments}
\label{sec:exp}

We now empirically evaluate \methodname{}, comparing it to several baselines in~\Cref{sec:exp-main}, investigating which design choices of \methodname{} are responsible for its efficacy in~\Cref{sec:exp-ablation}, and testing its behavior across ranks and on different data in~\Cref{sec:exp-taskdep}.

\subsection{Setup}
\label{sec:exp-setup}

\paragraph{Comparison.}
For every model, dataset, and rank, we only vary the optimizer and learning rate. All methods use LoRA on every linear layer except the output layer, with the standard initialization $B=0$ and random $A$ \cite{lora}. For each optimizer, we sweep the learning rate under a constant schedule with no weight decay and select the run with the lowest held-out loss.

The main comparisons finetune OLMo-2-1B~\cite{olmo2}, Llama-3.2-1B~\cite{llama3}, Qwen2.5-1.5B~\cite{qwen25}, and Llama-3-8B on code (OpenCoder \cite{opencoder}) and math (OpenMathInstruct-2 \cite{openmathinstruct2}) datasets, using rank $r=256$ unless stated otherwise. Additional experiments in \Cref{sec:exp-taskdep} vary the rank as well as change the finetuning dataset to a low-resource language. \Cref{app:exp} gives the full experimental configurations.

\paragraph{Baselines.} We compare \methodname{} (\Cref{sec:method}) to the following optimizers:
\begin{itemize}[leftmargin=1.5em,itemsep=1pt,topsep=2pt]
  \item \textbf{Adam} \cite{adam} --- our baseline, applied independently to the two factors $A$ and $B$.
  \item \textbf{Muon} \cite{muon} --- the solution of the spectral LMO~\eqref{eq:lmo}, applied independently to each factor.
  \item \textbf{iMuon} \cite{imuon} --- a product-aware spectral LMO~\eqref{eq:imuon-lmo-ambient} applied to momentum in the product space.
  \item \textbf{LoRA-RITE} \cite{lorarite} --- an adaptive LoRA optimizer that preconditions factor gradients in a transformation-invariant basis.
\end{itemize}
We use the authors' official implementations for iMuon and LoRA-RITE and provide additional background on these methods in \Cref{app:memoryless}. 

\subsection{Computational Efficiency}
\label{sec:exp-main}
We score each optimizer by its \emph{steps-to-Adam}, the number of steps its tuned run needs to match the final loss of tuned Adam. The \emph{step speedup} is then the ratio of Adam's total step count to the steps-to-Adam. Across our experiments, \methodname{} reaches this loss in $1.2$--$1.7\times$ fewer steps (\Cref{tab:breadth}). On Llama-3.2-1B finetuned on math, \methodname{} also outperforms Muon, iMuon, and LoRA-RITE, none of which improves over Adam (\Cref{fig:hero}). Additional learning curves can be found in \Cref{app:curves}.

\paragraph{Wall-clock.}\label{sec:exp-walltime} The optimizer FLOPs of each \methodname{} step are independent of the batch size, while the model forward
and backward passes scale with it (\Cref{app:flops}). Thus the relative per-step time shrinks
as the batch grows. At our batch size (\Cref{app:exp}) it adds at most $3\%$ wall-clock overhead compared to Adam on
all four base models (\Cref{tab:breadth}), and the step speedup carries to wall-clock almost
unchanged.  

\begin{table}[t]
\centering\small
\begin{tabular}{lrrrr}
\toprule
 & \multicolumn{2}{c}{Code} & \multicolumn{2}{c}{Math} \\
\cmidrule(lr){2-3}\cmidrule(lr){4-5}
base model & step & wall-clock & step & wall-clock \\
\midrule
OLMo-2-1B & $1.61\times$ & $1.57\times$ & $1.74\times$ & $1.70\times$ \\
Llama-3.2-1B & $1.55\times$ & $1.51\times$ & $1.63\times$ & $1.58\times$ \\
Qwen2.5-1.5B & $1.30\times$ & $1.27\times$ & $1.64\times$ & $1.60\times$ \\
Llama-3-8B & $1.20\times$ & $1.18\times$ & $1.59\times$ & $1.56\times$ \\
\bottomrule
\end{tabular}
\caption{\textbf{\methodname{} speedup over Adam across models and datasets.} Four base models at rank $r=256$, finetuned on code and math datasets. The wall-clock column discounts the step speedup (\Cref{sec:exp-main}) by the per-step overhead (\Cref{app:exp}). \methodname{} improves over Adam in every setting, by a wider margin on math than on code.}
\label{tab:breadth}
\end{table}

\subsection{Component Ablation}
\label{sec:exp-ablation}

The essential ingredients in \methodname{} can be decomposed into three components:
\begin{itemize}[leftmargin=1.7em,itemsep=1pt,topsep=2pt]
  \item \textbf{Product-awareness}. The direction LMO~\eqref{eq:decoupled-lora-lmo} uses the structure of the merged product $BA$ as opposed to treating $A$ and $B$ as independent matrices.
  
  \item \textbf{Curvature}. Preconditioning the gradient by an adaptive curvature estimate makes the update account for how different directions change the loss~\eqref{eq:pq-factor}.
  \item \textbf{Magnitude}. The sizes of the factor steps and the merged update are both controlled~\eqref{eq:ours-rho}.
\end{itemize}
Removing the curvature preconditioning ($P$ and $Q$ set to identity) and the magnitude control ($\tau = \eta/2$ in \eqref{eq:decoupled-direction-lmo}) reduces \methodname{} to Product Muon \eqref{eq:product-muon-step}. On Llama-3.2-1B finetuned on math (\Cref{fig:ablation}), Product Muon has no speedup over Adam, and the curvature and magnitude components each account for about half the speedup gap between Product Muon and \methodname{}.

\begin{figure}[t]
  \centering
  \includegraphics[width=\linewidth]{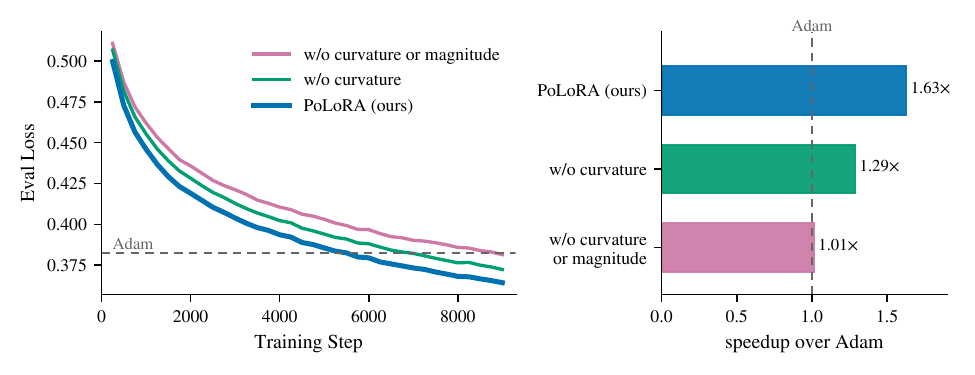}
  \caption{\textbf{Curvature and magnitude both improve \methodname{}.} Component ablation of Llama-3.2-1B finetuned on math at rank $r = 256$. The left panel shows loss curves as the curvature and then the magnitude control are removed from \methodname{} in turn, down to Product Muon. The right panel shows the speedup over Adam for each variant.}
  \label{fig:ablation}
\end{figure}

\subsection{Rank and Distribution Shift}
\label{sec:exp-taskdep}
To probe the effect of the LoRA rank and of the finetuning dataset on the performance of \methodname{}, we vary the rank while finetuning on math and vary the dataset from code to a low-resource language underrepresented in the pretraining mix.

\paragraph{Rank.} We sweep the rank $r$ on Llama-3.2-1B finetuned on math. As the rank grows, the step speedup of \methodname{} over Adam grows too (\Cref{tab:rank}; learning curves in \Cref{fig:curves}(c)). \methodname{} also exhibits learning-rate transfer: its optimal learning rate is stable across ranks, whereas Adam's shifts (\Cref{fig:lr_transfer}). We also observe reduced sensitivity to the learning rate compared to Adam.

\begin{table}[t]
\centering\small
\begin{tabular}{lrr}
\toprule
rank $r$ & step speedup & wall-clock speedup \\
\midrule
$32$  & $1.48\times$ & $1.44\times$ \\
$64$  & $1.54\times$ & $1.50\times$ \\
$128$ & $1.52\times$ & $1.49\times$ \\
$256$ & $1.63\times$ & $1.58\times$ \\
\bottomrule
\end{tabular}
\caption{\textbf{\methodname{} speedup over Adam across LoRA ranks.} Llama-3.2-1B finetuned on the math dataset, with only the rank varied across rows. The wall-clock column discounts the step speedup (\Cref{sec:exp-main}) by the per-step overhead (\Cref{app:exp}).}
\label{tab:rank}
\end{table}

\begin{figure}[t]
  \centering
  \includegraphics[width=\linewidth]{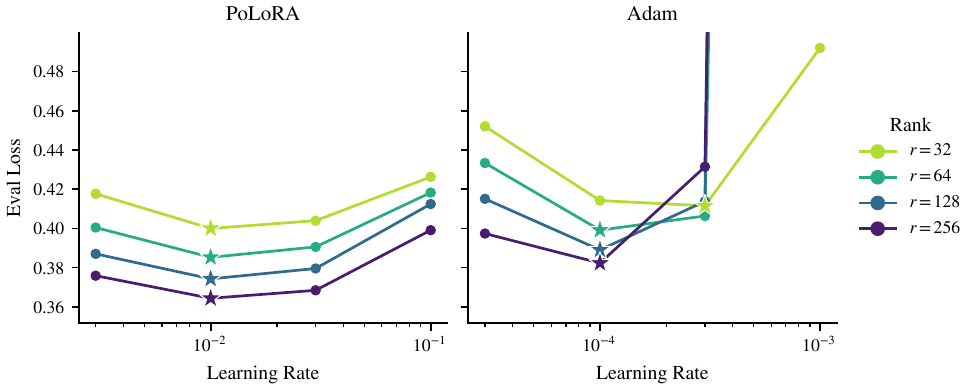}
  \caption{\textbf{\methodname{} exhibits learning-rate transfer.} Final loss versus learning rate of Llama-3.2-1B finetuned on the math dataset at each rank $r\in\{32,64,128,256\}$.
  The optimal learning rate for \methodname{} stays at $10^{-2}$ across ranks, while Adam's drifts.}
  \label{fig:lr_transfer}
\end{figure}

\paragraph{Distribution shift.} Coding data is well represented in the pretraining data of the Qwen2.5-1.5B model~\cite{qwen25}, while data for a low-resource language like Bengali is much less represented. From \Cref{fig:ood} we observe that the speedup over Adam is larger on the Bengali dataset (Aya \cite{aya}) than on code. This is consistent with the advantage of \methodname{} growing as finetuning must move the model further from its pretraining data.

\begin{figure}[t]
  \centering
  \includegraphics[width=\linewidth]{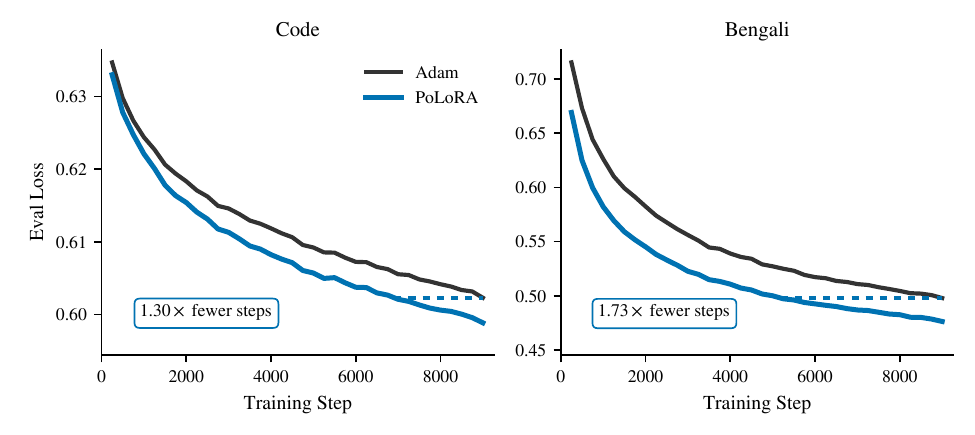}
  \caption{\textbf{The speedup over Adam is larger on the low-resource language.}
  Evaluation loss over training steps on Qwen2.5-1.5B at rank $r = 256$,
  finetuned on the code dataset (left)
  and the Bengali dataset (right).}
  \label{fig:ood}
\end{figure}

\section{Related Work}
\label{sec:related}

Here we review related work and contrast \methodname{} with the existing literature.

\paragraph{Hyperparameter rescaling for LoRA.}
LoRA~\cite{lora} enables parameter-efficient finetuning by freezing the pretrained
weights and training only a low-rank update~\eqref{eq:lora} through factors $A$ and
$B$, typically optimized with Adam~\cite{adam}. Several recent methods refine this
approach by rescaling hyperparameters such as the learning rates, adapter scale, or
initialization variance to account for the low-rank structure. Motivated by large-width
feature-learning arguments, LoRA+~\cite{loraplus} assigns $B$ a larger learning rate
than $A$, and follow-up work~\cite{hayou2024initialization, li2025beyond} extends this
to other initializations. Rank-stabilized LoRA~\cite{rslora} replaces the $\alpha/r$
adapter scale with $\alpha/\sqrt{r}$ to avoid the rank-dependent shrinkage that makes
high-rank adapters hard to train, and $\mu$A~\cite{mua} studies how these
hyperparameters should scale with rank to keep the optimal learning rate stable. In
contrast, \methodname{} keeps these scalar hyperparameters fixed and instead changes the update itself, computing the factor update directions in a curvature-aware metric on the merged product $BA$ rather than rescaling a coordinatewise Adam step.

\paragraph{Matrix-aware updates.}
Recently, optimizers that operate at the matrix level, rather than treating parameters as
a flattened vector, have attracted attention for their gains over Adam~\cite{liu2025muon,
qiu2026hyperparameter, benchmark_practices}. Muon~\cite{muon} solves the spectral-norm LMO
for a weight matrix, producing an update equal to the matrix sign of the momentum
gradient~\cite{polarexpress} (\Cref{lem:muon-lmo}). Shampoo~\cite{shampoo},
SOAP~\cite{soap}, and KL-Shampoo~\cite{klshampoo} estimate the gradient curvature as
Kronecker-factored second moments. Mousse~\cite{mousse} uses such a second moment estimate to
whiten the gradient before a matrix-sign step. \methodname{} likewise applies a matrix-sign step to a preconditioned gradient, with a metric derived from controlling the per-sample loss change across the batch (\Cref{sec:sample-loss-lmo}). We carry this update from the full weight matrix to the LoRA product and add magnitude control, which removes the need for learning-rate grafting~\cite{shi2023distributed, mousse}. We adapt the KL-Shampoo estimator to LoRA, fitting the preconditioners from the factor gradients rather than the weight gradient (\Cref{app:matnormal}).

\paragraph{Product-invariant LoRA optimizers.}
The learned update $BA$ is a product of two matrices, so it is determined only up to the
change of basis $(A,B)\mapsto(NA,BN^{-1})$ for invertible $N$. Since Adam is not invariant
to this transformation, a line of work has proposed LoRA optimizers that are. LoRA-RITE
removes the basis dependence with an adaptive preconditioner~\cite{lorarite}, while Riemannion
optimizes $BA$ directly on the fixed-rank matrix manifold~\cite{riemannion}. iMuon, LoRA-Muon, and Compositional Muon derive invariant updates from
spectral-norm LMOs~\cite{imuon,loramuon,tilde_csd}. As discussed in \Cref{app:memoryless}, the momentum-free LMO updates of LoRA-RITE, LoRA-Muon,
and Compositional Muon's half-split rule all coincide with \eqref{eq:decoupled-lora-lmo}, as does iMuon's intrinsic LMO. \methodname{} does not satisfy this invariance, but it outperforms iMuon in our experiments (\Cref{sec:exp-main}), suggesting that invariance is not the limiting factor in this setting.

\paragraph{Magnitude control for low-rank products.}
Product Muon~\eqref{eq:decoupled-lora-lmo} bounds $\norm{B\Delta A}_2$ and $\norm{\Delta B A}_2$ but not the factor steps $\norm{\Delta A}_2$ and $\norm{\Delta B}_2$. As a result, when $A$ or $B$ has small singular values, the updates $\Delta A$ and $\Delta B$ can be large enough to destabilize training. Spectron~\cite{spectron} and QuacK~\cite{quack} explicitly control the factor steps by scaling them with the factor norms. Spectron rescales both steps to spectral norm $\eta/(\norm{A}_2+\norm{B}_2+1)$, and QuacK scales each factor's learning rate by the inverse norm of the other. Similar to Spectron, \methodname{} rescales both factor steps to a common spectral norm $\rho=\eta/(\norm{A}_2+\norm{B}_2)$. With the $+1$ in its denominator, Spectron's bound also covers the quadratic term $\Delta B \Delta A$, which \methodname{} drops (\Cref{rem:quadratic} verifies that restoring it leaves the bound essentially unchanged).


\section{Conclusion}
\label{sec:conclusion}

\methodname{} takes a different route to optimizing LoRA adapters: it asks what a step does to the merged product $BA$ and to the per-sample losses, rather than treating the two LoRA factors as unrelated parameter matrices. This view leads to a practical optimizer that combines a product-aware spectral direction, lightweight curvature preconditioning, and explicit control of factor-step magnitudes. In our experiments this makes matrix-aware LoRA optimization consistently faster than Adam, and the optimal learning rate transfers across ranks.

The method is still built on approximations. For efficiency the preconditioners are diagonal, and the metric they induce is a Kronecker-factored approximation of the per-sample second moment. The derivation also relies on a linearized loss change and a linearized product update, so the optimizer does not control higher-order effects or the dropped $\Delta B\Delta A$ term. Finally, our evaluation focuses on LoRA finetuning for language models and on held-out loss as the main measure of progress; broader downstream evaluations, longer training horizons, other adapter families, and different hardware or batch-size regimes remain open.

These limitations point to several future directions. One is to design better outergradient approximations that remain cheap enough for adapter training. Another is to incorporate second-order information~\cite{du2026newtonmuonoptimizer} while keeping the update tractable. More broadly, the per-sample loss-control perspective~\eqref{eq:sample-lmo} and~\eqref{eq:polorafull-hard} may be useful beyond LoRA, for pretraining or for other compositions of maps.

\printbibliography

\clearpage
\appendix
\crefname{appendix}{Appendix}{Appendices}
\Crefname{appendix}{Appendix}{Appendices}
\crefalias{section}{appendix}
\crefalias{subsection}{appendix}
\startcontents[app]
\section*{Appendix Contents}
\printcontents[app]{}{1}{\setcounter{tocdepth}{2}}
\clearpage
\section{Experimental Details}
\label{app:exp}
This section gives the full configuration behind \Cref{sec:exp-setup}. Within each setting (a model, a dataset, and a rank), the optimizer and its learning rate are the only things we vary; everything in \Cref{tab:config} is held fixed. For each optimizer we sweep the learning rate and report the run with the lowest held-out loss.

\paragraph{Models.} Four pretrained base models, each adapted with LoRA: OLMo-2-1B \cite{olmo2}, Llama-3.2-1B \cite{llama3}, Qwen2.5-1.5B \cite{qwen25}, and Llama-3-8B \cite{llama3}.

\paragraph{Datasets.} Three instruction-tuning datasets: code (OpenCoder \cite{opencoder}), math (OpenMathInstruct-2 \cite{openmathinstruct2}), and Bengali (Aya \cite{aya}).

\paragraph{Data preparation.}
\begin{itemize}[leftmargin=1.5em,itemsep=1pt,topsep=2pt]
  \item Each dataset is tokenized once per base model, using that model's own tokenizer.
  \item We hold out $1\%$ of each dataset for evaluation, and use this split for both learning-rate selection and all reported losses.
  \item Training sequences are packed into fixed $2048$-token blocks, and evaluation sequences are padded to $2048$ tokens.
  \item Loss is computed on response tokens only; prompt and padding tokens are masked.
\end{itemize}

\paragraph{Fixed training configuration.} \Cref{tab:config} lists the settings held fixed across all runs. We adapt all linear layers except \texttt{lm\_head} to isolate the efficacy of the optimizer. Prior work, however, suggests that adapting only a subset of modules recovers most of the performance~\cite{lorawithoutregret, plop}.
\begin{table}[ht]
\centering\small
\begin{tabular}{@{}ll@{}}
\toprule
setting & value \\
\midrule
sequence length       & $2048$ \\
batch size             & $16$ \\
training steps         & $9000$ \\
LoRA rank $r$          & $256$ (varied in \Cref{sec:exp-taskdep}) \\
LoRA $\alpha$          & $r$ \\
LoRA dropout           & $0$ \\
LoRA $B$ init          & zeros \\
LoRA-adapted layers    & all linear except the output (\texttt{lm\_head}) \\
precision              & bf16 \\
max gradient norm      & $1.0$ (disabled for LoRA-RITE) \\
LR schedule            & constant \\
\bottomrule
\end{tabular}
\caption{Training configuration, held fixed across optimizers and settings except where noted.}
\label{tab:config}
\end{table}

\paragraph{Optimizer settings.} The learning-rate grid is spaced by factors of three, wide enough to bracket each optimizer's optimal learning rate. All other optimizer hyperparameters are fixed. \Cref{tab:optim-config} lists them for the baseline methods, and \methodname{} uses:
\begin{itemize}[leftmargin=1.5em,itemsep=1pt,topsep=2pt]
  \item momentum decay $\beta_1=0.9$ and curvature decay $\beta_2=0.99$,
  \item $8$ Gram Newton--Schulz iterations (\Cref{app:gramns}),
  \item $8$ power iterations per spectral-norm estimate (\Cref{app:smax}),
  \item relative damping $\delta=10^{-4}$ (\Cref{app:damping}),
  \item numerical-stability constant $\varepsilon=10^{-12}$ (\Cref{app:init}).
\end{itemize}
\begin{table}[ht]
\centering\small
\begin{tabular}{@{}lcccc@{}}
\toprule
 & Adam & Muon & iMuon & LoRA-RITE \\
\midrule
$\beta_1$                 & $0.9$     & $0.9$ & $0.95$ & $0.9$     \\
$\beta_2$                 & $0.999$   & ---   & ---    & $0.999$   \\
$\varepsilon$             & $10^{-8}$ & ---   & ---    & $10^{-6}$ \\
Newton--Schulz iterations & ---       & $8$   & $5$    & ---       \\
\bottomrule
\end{tabular}
\caption{Baseline hyperparameters. $\beta_1$ is the momentum decay, $\beta_2$ the second-moment decay, and $\varepsilon$ the numerical-stability constant.}
\label{tab:optim-config}
\end{table}

\paragraph{Baseline implementations.} We run iMuon and LoRA-RITE through the authors' official implementations, with the hyperparameters in \Cref{tab:optim-config}. LoRA-RITE is invariant to reparameterizations of the LoRA factors \cite[Def.~1]{lorarite}. A global gradient-norm clip would break this invariance, so we run LoRA-RITE without it.

\paragraph{Hardware.} Each run uses a single GPU. Wall-clock numbers (\Cref{sec:exp-walltime}) are measured on an NVIDIA RTX PRO 6000 (Blackwell) with \texttt{torch.compile} enabled. Reproducing all our experiments takes about $850$ GPU-hours.

\section{Proofs}
\label{app:proofs}

We first collect the gradient properties used to motivate the outergradient set~\eqref{eq:outgrad}.
\gradprop*
\begin{proof}[Proof of \Cref{lem:gradprop}]
For the {\bf Leverage bound}, let $g_i=\vec(G_i)$ and
$M=[g_1,\ldots,g_n]$, so $g_i = Me_i$ for every $i =1,\ldots, n$. Then
\[
\ip{\Sigma^\dagger g_i, g_i} = ng_i^\top (MM^\top)^{\dagger} g_i = n e_i^\top M^\top  (MM^\top)^{\dagger} M e_i  \leq n,
\]
since $M^\top  (MM^\top)^{\dagger} M$ is the orthogonal projector onto the range
of $M^\top$.

The {\bf Rank one} property is the chain rule: $W$ encodes a linear layer
$x_i \mapsto W x_i$, so
\[
\nabla \ell_{x_i}(Wx_i) =  \left. \frac{d }{dz} \ell_{x_i}(z)
\right|_{z =Wx_i} x_i^\top
=b_i x_i^\top,
\]
where $b_i = \left.\frac{d }{dz} \ell_{x_i}(z)  \right|_{z =Wx_i}$ is the
output-side gradient.
\end{proof}

We verify that the per-sample LMO~\eqref{eq:sample-lmo} is well-posed.
\begin{proposition}[Well-posedness of the per-sample LMO]
\label{prop:wellposed}
Let $\tau\ge 0$, let $G_1,\dots,G_n$ be the per-sample gradients in~\eqref{eq:sample-lmo}, and let $G=\tfrac1n\sum_{i=1}^n G_i$. Then the per-sample LMO~\eqref{eq:sample-lmo} attains a finite minimum.
\end{proposition}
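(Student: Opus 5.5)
The plan is to show that the objective is bounded below on the feasible set, and that it is a linear program whose value is finite, so it attains its minimum. The feasible set $F=\{\Delta W : |\ip{G_i,\Delta W}|\le\tau,\ i=1,\ldots,n\}$ is a nonempty polyhedron, since $\Delta W=0$ is feasible because $\tau\ge 0$. It is not bounded in general: any $\Delta W$ orthogonal to all $G_i$ lies in it. So compactness alone will not give existence, and the argument must use the special form of the objective.

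The key step is that $G=\tfrac1n\sum_i G_i$ lies in the span of the constraint normals. Hence, for any feasible $\Delta W$,
\[
\ip{G,\Delta W}=\frac1n\sum_{i=1}^n\ip{G_i,\Delta W}\ \ge\ -\frac1n\sum_{i=1}^n|\ip{G_i,\Delta W}|\ \ge\ -\tau .
\]
So the objective is bounded below by $-\tau$ on $F$. Moreover, $\ip{G,\Delta W}$ depends only on the vector $(\ip{G_i,\Delta W})_{i=1}^n\in\R^n$.

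To get attainment, I would write $\Delta W=\Delta W_\parallel+\Delta W_\perp$, where $\Delta W_\parallel\in S:=\operatorname{span}\{G_1,\ldots,G_n\}$ and $\Delta W_\perp\in S^\perp$. Both the constraints and the objective depend only on $\Delta W_\parallel$. The problem therefore reduces to minimizing a linear function over $F\cap S$. This set is closed, and it is bounded: on the finite-dimensional space $S$, the map $V\mapsto\max_i|\ip{G_i,V}|$ is a norm, because it vanishes only when $V\perp S$, and $V\in S$ then forces $V=0$. By Weierstrass, a continuous linear function on a nonempty compact set attains its minimum at some $V^\star$. Then $V^\star$ (or $V^\star+\Delta W_\perp$ for any $\Delta W_\perp\in S^\perp$) is a minimizer of the original problem, and its value is at least $-\tau$.

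The main obstacle is the unboundedness of $F$. The orthogonal decomposition handles it, and the one point to check is that $\max_i|\ip{G_i,\cdot}|$ really is a norm on $S$. As an alternative, one could invoke the LP fact that a linear program over a nonempty polyhedron with an objective bounded below attains its infimum. The explicit bound above is enough to apply it.
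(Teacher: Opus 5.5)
Your proof is correct, but its existence argument takes a different route from the paper's. Both proofs start from the same two facts: the feasible set is a nonempty polyhedron, and $\langle G,\Delta W\rangle\ge-\tau$ on it because $G$ is the average of the constraint normals. The paper stops there and invokes the standard LP result that a linear function bounded below on a nonempty polyhedron attains its minimum. It then notes only afterwards that the minimizer is not unique, since the objective is constant along $\operatorname{span}\{G_1,\ldots,G_n\}^\perp$. You instead build the proof on that degeneracy. You project onto $S=\operatorname{span}\{G_1,\ldots,G_n\}$, use that $V\mapsto\max_i|\langle G_i,V\rangle|$ is a norm on $S$ so that $F\cap S$ is compact, and apply Weierstrass. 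In your route the bound $-\tau$ is not needed for existence. What matters is that $G\in S$, which lets the objective ignore $S^\perp$; if $G$ had a component in $S^\perp$, the problem would be unbounded along it. Your version is self-contained and describes the full solution set as the minimizers over $F\cap S$ plus $S^\perp$. It also carries over unchanged to any norm bound on the vector $(\langle G_i,\Delta W\rangle)_{i=1}^n$, such as an $\ell_2$ bound, where the feasible set is no longer polyhedral and the LP theorem does not apply. The paper's version is shorter because it delegates attainment to a cited result, and it is exactly the alternative you mention in your final paragraph.
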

\begin{proof}
The feasible set is a nonempty polyhedron, since it contains $\Delta W=0$. For every feasible $\Delta W$,
\[
|\ip{G,\Delta W}|\le\tfrac1n\sum_{i=1}^n|\ip{G_i,\Delta W}|\le\tau,
\]
so the objective $\ip{G,\Delta W}$ is bounded below by $-\tau$. A linear function bounded below on a nonempty polyhedron attains its minimum~\cite{bertsimas1997introduction}, which gives the claim. The minimizer need not be unique, since the objective is constant along $\operatorname{span}\{G_1,\ldots,G_n\}^\perp$.
\end{proof}

Next we solve the spectral-norm LMO whose solution is the Muon update direction (\Cref{sec:muon}).
\begin{lemma}[Spectral norm LMO]\label{lem:muon-lmo}
Let $G\in\mathbb{R}^{m\times n}$ and $\eta>0$, and write
$\norm{G}_{\mathrm{nuc}}=\sum_i\sigma_i(G)$ for the nuclear norm, the sum of the
singular values. Then
\[
    \min_{\norm{X}_2\leq \eta}\ip{G,X}
    =
    -\eta\norm{G}_{\mathrm{nuc}},
\]
and $X^\star = -\eta\msign(G)$ is the unique minimizer of least Frobenius norm.
\end{lemma}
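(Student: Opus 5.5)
The plan has two parts: first show the minimum value equals $-\eta\norm{G}_{\mathrm{nuc}}$ using the duality between the spectral and nuclear norms, and then characterize the full set of minimizers to single out the one of least Frobenius norm.

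\emph{Lower bound.} Take the reduced SVD $G=U\Sigma V^\top$ with $U\in\R^{m\times k}$, $V\in\R^{n\times k}$, and $\Sigma=\diag(\sigma_1,\dots,\sigma_k)$ with all $\sigma_j>0$, where $k=\operatorname{rank}G$. For any $X$ with $\norm{X}_2\le\eta$ we write
\[
\ip{G,X}=\sum_{j=1}^k \sigma_j\, u_j^\top X v_j .
\]
Since $|u_j^\top X v_j|\le\norm{X}_2\le\eta$, this gives $\ip{G,X}\ge-\eta\sum_j\sigma_j=-\eta\norm{G}_{\mathrm{nuc}}$.

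\emph{Attainment.} Put $X^\star=-\eta UV^\top=-\eta\msign(G)$. Its spectral norm is $\eta$ (or it is $0$ if $G=0$, consistent with the convention $X/\norm{X}:=0$). A direct computation gives $\ip{G,X^\star}=-\eta\Tr(\Sigma)$, so the bound is attained and the minimum value is $-\eta\norm{G}_{\mathrm{nuc}}$.

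\emph{Characterizing the minimizers.} Equality in the lower bound holds if and only if $u_j^\top X v_j=-\eta$ for every $j\le k$. Because $\norm{Xv_j}_2\le\eta$ and $\norm{u_j}_2=1$, the Cauchy--Schwarz equality case forces $Xv_j=-\eta u_j$. Applying the same argument to $X^\top$ forces $X^\top u_j=-\eta v_j$. In the orthonormal bases $[U,U_\perp]$ and $[V,V_\perp]$, the block $U^\top X V$ is therefore $-\eta I_k$, and the off-diagonal blocks $U^\top X V_\perp$ and $U_\perp^\top X V$ vanish. Hence every minimizer has the form
\[
X=-\eta UV^\top+U_\perp Z V_\perp^\top,\qquad \norm{Z}_2\le\eta .
\]

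\emph{Least Frobenius norm.} The two terms in this decomposition are Frobenius-orthogonal, so
\[
\norm{X}_F^2=\eta^2k+\norm{Z}_F^2 .
\]
This is uniquely minimized at $Z=0$, which gives $X^\star$. Only $X^\star$ achieves the least Frobenius norm, which proves the uniqueness claim.

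The main obstacle is the equality-case step, which pins down the off-diagonal blocks. It requires using both $Xv_j=-\eta u_j$ and $X^\top u_j=-\eta v_j$. The second identity follows because $\norm{X^\top}_2=\norm{X}_2$. Everything else in the argument is routine.
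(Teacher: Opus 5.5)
Your proof is correct. The lower bound and attainment steps match the paper's. The paper invokes spectral--nuclear norm duality directly, and your expansion $\ip{G,X}=\sum_j\sigma_j u_j^\top X v_j$ with $|u_j^\top X v_j|\le\norm{X}_2$ is the proof of that duality inequality written out for this $G$.

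The routes diverge on the least-Frobenius-norm claim. The paper does not prove this part; it cites Lemma~C.2 of an external reference. You derive it yourself: the equality case of Cauchy--Schwarz gives $Xv_j=-\eta u_j$ and $X^\top u_j=-\eta v_j$. That yields the complete solution set $\{-\eta UV^\top+U_\perp Z V_\perp^\top : \norm{Z}_2\le\eta\}$, and the Pythagorean identity $\norm{X}_F^2=\eta^2 k+\norm{Z}_F^2$ then singles out $Z=0$.

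The paper's version is shorter but not self-contained. Yours is self-contained and gives more information. It shows that non-uniqueness comes only from the left and right null spaces of $G$ (spanned by $U_\perp$ and $V_\perp$). So the minimizer is unique outright when $\operatorname{rank} G=\min(m,n)$, since one of $U_\perp, V_\perp$ is then empty and the $Z$ block disappears. Your characterization also carries over unchanged through the substitution $Y=LXR$, giving the ``unique minimizer of $\norm{LXR}_F$'' claim in \Cref{lem:whitened-spectral-lmo} with no further citation.
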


\begin{proof}[Proof of \Cref{lem:muon-lmo}]
We lower-bound the objective by norm duality and exhibit a feasible point that
attains it. By duality of the spectral and nuclear norms, every feasible $X$
satisfies
\[
    \ip{G,X}
    \geq
    -\norm{G}_{\mathrm{nuc}}\norm{X}_2
    \geq
    -\eta\norm{G}_{\mathrm{nuc}}.
\]
Let $G=U\Sigma V^\top$ be a reduced singular value decomposition, so
$\Sigma=\diag(\sigma_i(G))$. The choice
\[
    X=-\eta UV^\top=-\eta\msign(G)
\]
is feasible and satisfies
\[
    \ip{G,X}
    =
    -\eta\Tr(\Sigma)
    =
    -\eta\norm{G}_{\mathrm{nuc}},
\]
so it attains the lower bound and is therefore optimal. The minimum-norm
statement follows from Lemma C.2~\cite{parshakova2026muon}.
\end{proof}

Using those properties, we evaluate the outergradient support function under a Kronecker-factored metric, turning the per-sample loss constraint (\Cref{sec:sample-loss-lmo}) into a preconditioned spectral norm.
\begin{lemma}[Outergradient set under a Kronecker factorization]
\label{lem:pseudogradient-spectral}
Let
$P\in\R^{d_{\mathrm{out}}\times d_{\mathrm{out}}}$ and
$Q\in\R^{d_{\mathrm{in}}\times d_{\mathrm{in}}}$ be symmetric positive definite,
and let $\Sigma=Q\otimes P$. Recall the outergradient set~\eqref{eq:outgrad},
\[
\mathcal{G}(\Sigma)=\bigl\{\, bx^\top ~\big|~ b\in\R^{d_{\mathrm{out}}},\ x\in\R^{d_{\mathrm{in}}},\ \ip{\Sigma^\dagger\vec(bx^\top),\vec(bx^\top)}\leq n \,\bigr\}.
\]
Then for every $\Delta W\in\R^{d_{\mathrm{out}}\times d_{\mathrm{in}}}$,
\[
  \max_{\widetilde{G} \in \mathcal{G}(\Sigma)} \ip{ \widetilde{G} , \Delta W}
  =\sqrt{n}\norm{P^{1/2}\Delta W Q^{1/2}}_2 .
\]
\end{lemma}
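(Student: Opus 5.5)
We reduce the maximization to a scalar problem over the pair $(b,x)$ and then identify the optimum as a bilinear form whose maximum over unit vectors is a spectral norm.

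\textbf{Step 1: rewrite the membership condition.} Since $P$ and $Q$ are symmetric positive definite, $\Sigma=Q\otimes P$ is invertible, so $\Sigma^\dagger=\Sigma^{-1}=Q^{-1}\otimes P^{-1}$. Using $\vec(bx^\top)=x\otimes b$ and the mixed-product rule, the constraint becomes
\[
\ip{\Sigma^{-1}\vec(bx^\top),\vec(bx^\top)}=(x^\top Q^{-1}x)(b^\top P^{-1}b)=\norm{P^{-1/2}b}_2^2\,\norm{Q^{-1/2}x}_2^2\le n .
\]
The objective is $\ip{bx^\top,\Delta W}=b^\top\Delta W x$.

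\textbf{Step 2: change variables.} Set $u=P^{-1/2}b$ and $v=Q^{-1/2}x$. This map is a bijection of $\R^{d_{\mathrm{out}}}\times\R^{d_{\mathrm{in}}}$ onto itself. The problem becomes
\[
\max\; u^\top\bigl(P^{1/2}\Delta W Q^{1/2}\bigr)v \quad\text{subject to}\quad \norm{u}_2\norm{v}_2\le\sqrt n .
\]

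\textbf{Step 3: separate scale from direction.} Write $u=s\hat u$ and $v=t\hat v$ with $\norm{\hat u}_2=\norm{\hat v}_2=1$ and $s,t\ge0$. The constraint then reads $st\le\sqrt n$, and the objective equals $st\,\hat u^\top M\hat v$, where $M=P^{1/2}\Delta WQ^{1/2}$. It remains to bound this objective from both sides.

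\emph{Upper bound.} For every feasible pair, Cauchy--Schwarz gives $\hat u^\top M\hat v\le\norm{M}_2$. If $\hat u^\top M\hat v\ge 0$, the objective is at most $st\norm{M}_2\le\sqrt n\norm{M}_2$. If $\hat u^\top M\hat v<0$, the objective is negative and hence below the nonnegative quantity $\sqrt n\norm{M}_2$.

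\emph{Attainment.} Take $\hat u,\hat v$ to be the top left and right singular vectors of $M$ and set $s=t=n^{1/4}$. This pair is feasible and achieves $\sqrt n\,\sigma_{\max}(M)=\sqrt n\norm{M}_2$.

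The edge cases $\Delta W=0$ and $b=0$ or $x=0$ cause no trouble, because all quantities involved are zero. The maximum is attained since the supremum is exhibited explicitly.

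The only step needing care is the first one. It requires the Kronecker identity $(Q\otimes P)^{-1}=Q^{-1}\otimes P^{-1}$ together with the vec convention $\vec(bx^\top)=x\otimes b$, which fixes which factor acts on $b$ and which on $x$. With column-major vec, $(Q^{-1}\otimes P^{-1})(x\otimes b)=Q^{-1}x\otimes P^{-1}b$. Taking the inner product with $x\otimes b$ then yields the product of the two quadratic forms, as claimed in Step 1.
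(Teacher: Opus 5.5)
Your proof is correct and follows the paper's argument: you invert $Q\otimes P$, use $\vec(bx^\top)=x\otimes b$ to factor the constraint, and substitute $b=P^{1/2}u$, $x=Q^{1/2}v$ to reduce it to $\norm{u}_2\norm{v}_2\le\sqrt{n}$, from which the spectral norm $\sqrt{n}\norm{P^{1/2}\Delta W Q^{1/2}}_2$ follows. Your explicit upper bound and your attainment via the top singular vectors with $s=t=n^{1/4}$ just spell out the final maximization step, which the paper states in one line.
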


\begin{proof}[Proof of \Cref{lem:pseudogradient-spectral}]
We evaluate the support function by a change of variables that turns the
constraint into a product of Euclidean balls. Since $P$ and $Q$ are positive
definite, $\Sigma^\dagger=\Sigma^{-1}=Q^{-1}\otimes P^{-1}$. For
$bx^\top\in\mathcal{G}(\Sigma)$, column-major vectorization gives
$\vec(bx^\top)=x\otimes b$, so
\[
\ip{\Sigma^\dagger\vec(bx^\top),\vec(bx^\top)}
=(x\otimes b)^\top(Q^{-1}\otimes P^{-1})(x\otimes b)
=(x^\top Q^{-1}x)(b^\top P^{-1}b).
\]
Writing $b=P^{1/2}u$ and $x=Q^{1/2}v$, the constraint becomes
$\norm{u}_2^2\norm{v}_2^2\leq n$ and the objective becomes
\[
\ip{bx^\top,\Delta W}=b^\top\Delta W x=u^\top P^{1/2}\Delta W Q^{1/2}v .
\]
Maximizing over the balls gives the spectral norm,
\[
\max_{\widetilde{G}\in\mathcal{G}(\Sigma)}\ip{\widetilde{G},\Delta W}
=\max_{\norm{u}_2\norm{v}_2\leq\sqrt{n}}u^\top P^{1/2}\Delta W Q^{1/2}v
=\sqrt{n}\,\norm{P^{1/2}\Delta W Q^{1/2}}_2. \qedhere
\]
\end{proof}

We then solve that preconditioned spectral-norm LMO, which sets the \methodname{} direction.
\begin{lemma}[Preconditioned spectral-norm LMO]
\label{lem:whitened-spectral-lmo}
Let $L$ and $R$ be symmetric positive definite. For a matrix $M$ and $\tau>0$,
\begin{equation}\label{e-gen-spectral-lmo-update}
X^\star = -\tau L^{-1}
\msign\left(L^{-1}MR^{-1}\right) R^{-1}
\end{equation}
is a solution of
\[
\begin{array}{ll}
\underset{\|LXR\|_2\leq\tau}{\mathrm{minimize}}&
\ip{M,X}.
\end{array}
\]
Moreover, $X^\star$ is the unique minimizer of
$\|LXR\|_F$ over the solution set.
\end{lemma}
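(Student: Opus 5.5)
The plan is to reduce to \Cref{lem:muon-lmo} by the change of variables $Y = LXR$. Since $L$ and $R$ are symmetric positive definite, hence invertible, the map $X\mapsto LXR$ is a linear bijection, and $X = L^{-1}YR^{-1}$. In the new variable the constraint becomes the plain spectral ball $\norm{Y}_2\le\tau$. By the cyclic property of the trace and the symmetry of $L,R$, the objective is
\[
\ip{M, L^{-1}YR^{-1}} = \Tr\bigl(M^\top L^{-1} Y R^{-1}\bigr) = \ip{L^{-1}MR^{-1}, Y}.
\]
So the problem is equivalent to minimizing $\ip{\widetilde M, Y}$ over $\norm{Y}_2\le\tau$, where $\widetilde M := L^{-1}MR^{-1}$.

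Applying \Cref{lem:muon-lmo} with $G=\widetilde M$ and $\eta=\tau$ gives the minimizer $Y^\star=-\tau\msign(\widetilde M)$. It also shows that this is the unique minimizer of least Frobenius norm among all minimizers in $Y$. Mapping back, $X^\star = L^{-1}Y^\star R^{-1} = -\tau L^{-1}\msign(L^{-1}MR^{-1})R^{-1}$, which is exactly \eqref{e-gen-spectral-lmo-update}. Because the change of variables is a bijection that preserves both feasibility and objective values, $X^\star$ solves the original problem.

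For the uniqueness claim, note that the solution set in $X$ is the image under $Y\mapsto L^{-1}YR^{-1}$ of the solution set in $Y$, and that $\norm{LXR}_F=\norm{Y}_F$ along this correspondence. Minimizing $\norm{LXR}_F$ over the $X$-solution set is therefore the same as minimizing $\norm{Y}_F$ over the $Y$-solution set. By \Cref{lem:muon-lmo}, the latter has the unique minimizer $Y^\star$, and hence $X^\star$ is the unique minimizer of $\norm{LXR}_F$.

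I do not expect a genuine obstacle here. The only step needing care is the adjoint computation, where the symmetry of $L$ and $R$ is what makes $L^{-1}MR^{-1}$, rather than $L^{-\top}MR^{-\top}$, appear. The substantive minimum-norm content is inherited from \Cref{lem:muon-lmo}, which in turn cites \cite{parshakova2026muon}.
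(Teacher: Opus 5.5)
Your proposal is correct and follows the paper's proof: the substitution $Y=LXR$ turns the problem into the plain spectral LMO of \Cref{lem:muon-lmo} with gradient $L^{-1}MR^{-1}$, and mapping $Y^\star$ back gives $X^\star$. You are more explicit than the paper on two points it leaves implicit: where the symmetry of $L,R$ enters the adjoint computation, and how the least-norm uniqueness transfers via $\norm{LXR}_F=\norm{Y}_F$ under the bijection.
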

\begin{proof}[Proof of \Cref{lem:whitened-spectral-lmo}]
We change variables to reduce the preconditioned LMO to the plain spectral-norm
LMO of \Cref{lem:muon-lmo}. Set $Y=LXR$, so $X=L^{-1}YR^{-1}$; the constraint
becomes $\norm{Y}_2\le\tau$ and the objective becomes
\[
\ip{M,X}=\ip{L^{-1}MR^{-1},Y} .
\]
By \Cref{lem:muon-lmo} with budget $\tau$, the unique least Frobenius-norm
minimizer is
\[
    Y=-\tau\,\msign(L^{-1}MR^{-1}).
\]
Substituting $X=L^{-1}YR^{-1}$ gives the stated solution.
\end{proof}

\begin{remark}[Second-order term in the merged update]
\label{rem:quadratic}
The magnitude rule bounds the \emph{linearized} merged update; here we check that
restoring the second-order term dropped in that linearization leaves the bound
essentially unchanged. Writing $s=\norm{A}_2+\norm{B}_2$, the
rule~\eqref{eq:ours-rho} sets $\rho=\eta/s$ to bound the linearized update
$B\Delta A+\Delta B A$ of~\eqref{eq:DeltaW-lin}. The actual update also contains
the quadratic term $\Delta B\Delta A$ dropped there,
\[
  \Delta W = B\Delta A+\Delta B A+\Delta B\Delta A .
\]
With $\norm{\Delta A}_2=\norm{\Delta B}_2=\rho$, the triangle inequality and
submultiplicativity give
\[
  \norm{\Delta W}_2\le\rho s+\rho^2=\eta\Bigl(1+\tfrac{\eta}{s^2}\Bigr),
\]
so the actual update exceeds the target $\eta$ by at most the relative amount
$\eta/s^2$. At the standard initialization $B=0$ with $A$ having independent
entries of variance $\Theta(1/d_{\mathrm{in}})$, we have $s=\norm{A}_2=\Theta(1)$,
so the overshoot is $O(\eta)$ and shrinks as the factors grow.
\end{remark}

\section{Momentum-Free Reductions to Product Muon}
\label{app:memoryless}
In this appendix we show that the LMO updates of LoRA-RITE~\cite{lorarite}, LoRA-Muon~\cite{loramuon}, and Compositional Muon's half-split rule~\cite{tilde_csd} all reduce to Product Muon~\eqref{eq:decoupled-lora-lmo} once their moment accumulation is removed, as does the \emph{intrinsic spectral LMO} that Corollary~4.1 of~\cite{imuon} states for iMuon. The officially implemented iMuon optimizer, however, differs (see~\eqref{eq:imuon-implemented}).

\paragraph{Setup.} As in \Cref{sec:method}, 
let $G=\nabla_W \mathcal{L}(W) \big |_{W=W_0+BA}$ be the gradient with respect to the weight matrix, $G_A=B^\top G$ and $G_B=GA^\top$ be the factor gradients, 
and $S_A=AA^\top$ and
$S_B=B^\top B$ be the $r\times r$ factor Gram matrices.
The two factorwise LMOs in Product Muon~\eqref{eq:decoupled-lora-lmo} with budget $\tau$ are
\begin{equation}
  \begin{aligned}
  \Delta A &\in    \argmin_{\Delta A}~ \ip{G_A, \Delta A} \quad \textnormal{subject to}\quad
\norm{B \Delta A}_2\le\tau \\
\Delta B &\in   \argmin_{\Delta B}~ \ip{G_B,  \Delta B} \quad \textnormal{subject to}\quad  \norm{\Delta B A}_2\le\tau.
  \end{aligned}
  \label{eq:decoupled-lora-lmo-app} 
\end{equation}
We will use the fact that for any matrix $M$ with full column rank and reduced SVD $M=U\Sigma V^\top$,
the matrix sign equals
\begin{equation}
  M(M^\top M)^{-1/2}=
  UV^\top=\msign(M).
  \label{eq:polar-id}
\end{equation}

\paragraph{LoRA-RITE.}
Algorithm~1 of LoRA-RITE~\cite{lorarite} adds three pieces of memory to a base step. These are an EMA first moment with parameter $\beta_1$, an accumulated second moment transported across the changing factor basis, and an ``escaped-mass'' floor added to that second moment.
Dropping all three, we get the authors' momentum-free update (their Eq.~(14)).
In our notation, with $\widehat G_B := G_B S_A^{-1/2}$, it becomes
\[
  \Delta B=-\eta \widehat G_B (\widehat G_B^\top\widehat G_B)^{-1/2} S_A^{-1/2}.
\]
With no accumulation, by \eqref{eq:polar-id} the middle factor is the matrix sign, so
\begin{equation*}
  \Delta B=-\eta \msign(\widehat G_B) S_A^{-1/2}
  =-\eta \msign \big(G_B S_A^{-1/2}\big) S_A^{-1/2},
\end{equation*}
which is the $B$-update solving the Product Muon LMO~\eqref{eq:decoupled-lora-lmo-app} with budget $\tau=\eta$. Analogous arguments hold for the $A$-update.

\paragraph{iMuon, momentum-free.}
iMuon relies on a choice of \emph{ambient matrix} $H\in\R^{d_{\mathrm{out}}\times d_{\mathrm{in}}}$. The intrinsic spectral LMO of Corollary~4.1 maps this ambient matrix to the factor updates as follows
\begin{equation}
  \Delta B(H) =\msign \big(HA^\top S_A^{-1/2}\big)S_A^{-1/2},\qquad
  \Delta A(H) =S_B^{-1/2}\msign \big(S_B^{-1/2}B^\top H\big).
  \label{eq:imuon-lmo-ambient}
\end{equation}
For the base Algorithm~1 in~\cite{imuon}, the ambient matrix is the raw gradient $G$, so~\eqref{eq:imuon-lmo-ambient}
again gives the Product Muon update directions of~\eqref{eq:product-muon-step}. Our experiments benchmark the official iMuon implementation from Appendix~K of~\cite{imuon}, which differs from this momentum-free decoupled update, as we show next.

\paragraph{iMuon, with momentum.} The official implementation (their Appendix~K) does not pass the raw gradient to the intrinsic LMO~\eqref{eq:imuon-lmo-ambient}. It maintains momentum buffers $m_A$ and $m_B$ with parameter $\beta$, forms the look-aheads $\widetilde{G}_A=G_A+\beta m_A$ and $\widetilde{G}_B=G_B+\beta m_B$, and uses as input the combined ambient matrix
\begin{equation*}
  \widehat M=\widetilde{G}_B A+B\widetilde{G}_A .
\end{equation*}
Turning momentum off ($\beta=0$) removes the buffers but not the recombination:
\begin{equation*}
  \widehat M=G_B A+B G_A=G\,A^\top A+BB^\top G\ \neq\ G .
\end{equation*}
Plugging this $\widehat M$ into \eqref{eq:imuon-lmo-ambient} and using $G_A A^\top=B^\top G_B$ gives
\begin{equation}\label{eq:imuon-implemented}
  \widehat M A^\top=G_B S_A+(BB^\top)G_B
  \quad\Longrightarrow\quad
  \Delta B(\widehat M)=\msign\!\big(G_B S_A^{1/2}+(BB^\top)G_B S_A^{-1/2}\big)S_A^{-1/2},
\end{equation}
which differs from the Product Muon update~\eqref{eq:product-muon-step}. The argument for $\Delta A$ is symmetric. Thus Product Muon is the momentum-free limit of the iMuon theory (Corollary~4.1), but not of the implemented optimizer.

\paragraph{LoRA-Muon.}
The momentum-free LMOs of the concurrent LoRA-Muon paper~\cite{loramuon} (its Eqs.~(7) and~(8)) are precisely the decoupled factorwise LMOs~\eqref{eq:decoupled-lora-lmo}. LoRA-Muon further shows that the resulting update is equivalent to the simplified QR-coordinate form of LoRA-RITE.

\paragraph{Compositional Muon.}
Compositional Muon~\cite{tilde_csd} derives product-aware spectral updates for the attention products $W_QW_K^\top$ and $W_OW_V$. For the QK product, its half-split rule imposes the two constraints $\norm{\Delta W_Q W_K^\top}_2\le\epsilon/2$ and $\norm{W_Q\Delta W_K^\top}_2\le\epsilon/2$ separately, and the resulting LMOs are solved by
\[
  \Delta W_Q=-\tfrac{\epsilon}{2}\,\msign\!\big(G_Q C_K^{-1}\big)\,C_K^{-1},
  \qquad C_K=(W_K^\top W_K)^{1/2},
\]
where $G_Q$ is the gradient with respect to $W_Q$, and symmetrically for $W_K$. Setting $W_Q=B$ and $W_K=A^\top$ gives $C_K=(AA^\top)^{1/2}$ and $G_Q=G_B$, and the pair of updates becomes the Product Muon step~\eqref{eq:product-muon-step} with $\eta=\epsilon$.

\section{Estimating the Preconditioners}
\label{app:matnormal}
In \Cref{sec:sample-loss-lmo} we treated the preconditioners $P$ and $Q$ as given when writing the update~\eqref{eq:polorafull-update}. Here we describe how we compute them using online updates from the factor gradients and a normalization that makes these updates invariant to the rescaling $(P,Q)\mapsto(aP,a^{-1}Q)$, arriving at the $p,q$ updates~\eqref{eq:ours-klcoupling} of \Cref{alg:ours}.

\paragraph{Notation.} Consider a given linear layer. Let $t$ denote the current optimizer step. Whenever it is clear from context, we suppress the subscript $t$. For each step $s\le t$,
let $A_s,B_s$ be the LoRA factors and
$G_s\in\R^{d_{\mathrm{out}}\times d_{\mathrm{in}}}$ the gradient of the batch loss with respect to the weight matrix $W_s=W_0+B_sA_s$. Write
\[
  G_{A,s}=B_s^\top G_s\in\R^{r\times d_{\mathrm{in}}},\qquad
  G_{B,s}=G_sA_s^\top\in\R^{d_{\mathrm{out}}\times r}
\]
for the factor
gradients~\eqref{e-grad-wrt-factors}. We use two standard properties of the Kronecker product \cite{magnus2019matrix}: the mixed-product rule $(U \otimes V)(W \otimes Z) = UW \otimes VZ$, and $\vec(X^\top)=\mathcal{K}\vec(X)$, where the commutation matrix $\mathcal{K}$ is a permutation matrix satisfying $\mathcal{K}(U \otimes V)\mathcal{K}^\top = V \otimes U$ for square $U$ and $V$.

\paragraph{EMA averaging.} For any sequence $x_s$, write the EMA of the sequence up to step $t$ as
\[
  \EMA^{\beta}_{s\le t}[x_s]:=(1-\beta)\sum_{s\le t}\beta^{t-s} x_s,
\]
with decay factor $\beta$. Just as we replace the factor gradients $G_A$ and $G_B$ in the objective of \Cref{sec:sample-loss-lmo} with averaged versions $\widehat M_A$ and $\widehat M_B$ (line~\ref{ln:mom} of \Cref{alg:ours}), we replace the sample-gradient second-moment matrix $\Sigma$ (\Cref{lem:gradprop}) with the EMA of the outer products of the vectorized full gradients,
\[
  \overline \Sigma_t:=\EMA^{\beta_2}_{s\le t}[\vec(G_s) \vec(G_s)^\top],
\]
with $\beta_2$ the curvature decay of \Cref{alg:ours}.

\paragraph{Factor-gradient moments.}
As in \Cref{sec:sample-loss-lmo}, to make the update tractable we approximate $\overline \Sigma_t \approx Q_t \otimes P_t$~\eqref{eq:pq-factor} with diagonal matrices $P_t$ and $Q_t$. Backpropagation gives efficient access only to the factor gradients, so we work with their averaged second moments,
\begin{equation} \label{eq:factor-moments}
  \overline \Sigma_{A,t}:=\EMA^{\beta_2}_{s\le t}\bigl[\vec(G_{A,s})\vec(G_{A,s})^\top\bigr],
  \qquad
  \overline \Sigma_{B,t}:=\EMA^{\beta_2}_{s\le t}\bigl[\vec(G_{B,s}^\top)\vec(G_{B,s}^\top)^\top\bigr].
\end{equation}
Suppose the Kronecker approximation is exact, that is $\overline \Sigma_t = Q \otimes P$.

Using that $G_{A,s}=B_s^\top G_s$ and $G_{B,s}^\top = A_s G_s^\top$, it follows by vectorizing that
\[
  \vec(G_{A,s}) = (I \otimes B_s^\top)\vec(G_s),
  \qquad
  \vec(G_{B,s}^\top) = (I \otimes A_s)\,\vec(G_s^\top)
    = (I \otimes A_s)\,\mathcal{K}\vec(G_s).
\]
We now make an approximation by replacing $A_s$ and $B_s$ with their current values $A_t$ and $B_t$ in the above, that is we will use
\[
  \vec(G_{A,s}) \approx (I \otimes B_t^\top)\vec(G_s),
  \qquad
  \vec(G_{B,s}^\top) 
    \approx (I \otimes A_t)\,\mathcal{K}\vec(G_s).
\]
 With this approximation, and 
using the above in~\eqref{eq:factor-moments} gives
\begin{equation}\label{eq:observable-moments}
\begin{aligned}
  \overline \Sigma_{A,t} &= (I \otimes B_t^\top)(Q \otimes P)(I \otimes B_t) = Q\otimes C_{B_t}(P),\\
  \overline \Sigma_{B,t} &= (I \otimes A_t)\,\mathcal{K}(Q \otimes P)\mathcal{K}^\top(I \otimes A_t^\top) = (I \otimes A_t)(P \otimes Q)(I \otimes A_t^\top) = P\otimes C_{A_t}(Q),
\end{aligned}
\end{equation}
where $C_{B_t}(P):=B_t^\top P B_t$ and $C_{A_t}(Q):=A_t Q A_t^\top$.

\paragraph{Fitting the preconditioners.} Fitting the observed moments
to the predictions of the Kronecker model
in~\eqref{eq:observable-moments} requires a measure of discrepancy. Following
KL-Shampoo~\cite{klshampoo}, we use the log-determinant
divergence~\cite{logdetdiv}, which up to terms not involving $S$ is
\[
  D(\Sigma,S)=\Tr(\Sigma S^{-1})+\log\det S.
\]
We fit each preconditioner with the other held fixed:
\begin{equation}\label{eq:factor-fit}
  \hat q(P):=\argmin_{q>0}
    D\bigl(\overline \Sigma_{A,t},\diag(q)\otimes C_{B_t}(P)\bigr),
  \qquad
  \hat p(Q):=\argmin_{p>0}
    D\bigl(\overline \Sigma_{B,t},\diag(p)\otimes C_{A_t}(Q)\bigr).
\end{equation}
A self-consistent pair $(p^*,q^*)$ satisfies $q^*=\hat q(\diag(p^*))$ and $p^*=\hat p(\diag(q^*))$. With $P$ fixed, the first objective in~\eqref{eq:factor-fit} separates over the coordinates of $q$. Minimizing each coordinate yields a closed-form solution for $\hat q(P)$, and likewise for $\hat p(Q)$:
\begin{equation}\label{eq:conditional-minimizers}
\begin{aligned}
  \hat q(P)&=\frac1r\diag\left(\EMA^{\beta_2}_{s\le t}\bigl[G_{A,s}^\top C_{B_t}(P)^{-1}G_{A,s}\bigr]\right),\\
  \hat p(Q)&=\frac1r\diag\left(\EMA^{\beta_2}_{s\le t}\bigl[G_{B,s} C_{A_t}(Q)^{-1}G_{B,s}^\top\bigr]\right).
\end{aligned}
\end{equation}
Each minimizer depends on the other preconditioner through
$C_{B_t}(P)$ or $C_{A_t}(Q)$.

\paragraph{Online update.} Re-evaluating the minimizers
in~\eqref{eq:conditional-minimizers} at each step would require
storing every past factor gradient, since $C_{B_t}(P)$ and $C_{A_t}(Q)$ change
as $P$ and $Q$ are updated. \methodname{} instead evaluates each factor
gradient once, at its own step, and accumulates the results in the vectors
$p_t$ and $q_t$:
\begin{equation}\label{eq:ours-klcoupling}
\begin{alignedat}{2}
  \hat{q}_t(P_t)
    &= \frac1r\diag\bigl(G_{A,t}^\top C_{B_t}(P_t)^{-1}G_{A,t}\bigr),
  &\qquad
  \hat{p}_t(Q_t)
    &= \frac1r\diag\bigl(G_{B,t} C_{A_t}(Q_t)^{-1}G_{B,t}^\top\bigr),
  \\
  q_{t+1}
    &= \beta_2q_t+(1-\beta_2)\hat{q}_t(P_t),
  &\qquad
  p_{t+1}
    &= \beta_2p_t+(1-\beta_2)\hat{p}_t(Q_t),
  \\
  Q_{t+1}
    &= \diag\bigl(q_{t+1}/\norm{q_{t+1}}_\infty\bigr),
  &\qquad
  P_{t+1}
    &= \diag\bigl(p_{t+1}/\norm{p_{t+1}}_\infty\bigr).
\end{alignedat}
\end{equation}
Up to the exponentially decayed initialization $\beta_2^{t+1}q_0$, unrolling the
$q_{t+1}$ update in~\eqref{eq:ours-klcoupling} gives the
minimizer $\hat q(P)$ of~\eqref{eq:conditional-minimizers} with the
step-$s$ matrix $C_{B_s}(P_s)$ in place of a fixed $C_{B_t}(P)$, and likewise for $p_{t+1}$. In \Cref{alg:ours}, the preconditioner updates (lines~\ref{ln:qmom}
and~\ref{ln:pmom}) reuse the matrices $C_{B_t}$ and $C_{A_t}$ already computed for the direction step (line~\ref{ln:pqnorm}), so each step inverts $C_{B_t}$ and $C_{A_t}$ once rather than twice.

\paragraph{Normalization.} The pair $(P, Q)$ enters the
approximation of $\overline \Sigma_t$ only through the product
$Q \otimes P$, and
\[
  (a^{-1}Q)\otimes(aP)=Q\otimes P
  \qquad\text{for every } a>0,
\]
so we want the optimizer to behave identically at $(P,Q)$ and
$(aP,a^{-1}Q)$. The update direction is the same at both, but
\[
  \hat q_t(aP_t)=a^{-1}\hat q_t(P_t),
  \qquad
  \hat p_t(a^{-1}Q_t)=a\,\hat p_t(Q_t),
\]
so we normalize $q_{t+1},p_{t+1}$ in~\eqref{eq:ours-klcoupling} to
make the $p,q$ updates invariant as well.

\clearpage
\section{Implementation Details}
\label{app:primitives}
This section details how \Cref{alg:ours} initializes its state and implements
its numerical subroutines.

\paragraph{Initialization.}\label{app:init}
The EMA vectors start at $p = q = \varepsilon\mathbf{1}$, which the
normalization in line~\ref{ln:pqnorm} maps to the identity metric at the first
step. Since $B=0$ at initialization, $G_A$, $\widehat M_A$, and $C_B$ all
vanish, so $D_A=0$ and $A$ stays fixed (line~\ref{ln:spectralnormupdate}) until
$B$ first becomes nonzero. The $B$ factor updates from the first step, with
$C_A = AQA^\top = AA^\top$, so its first update coincides in direction with the
Product Muon step~\eqref{eq:product-muon-step}.

\paragraph{Relative damping.}\label{app:damping}
Each inverse in \Cref{alg:ours} is damped so that directions of
nearly zero curvature do not blow up. For a positive
semidefinite matrix $C$ ($C_A$ or $C_B$ in \eqref{eq:ours-closure}) we damp relative to the top eigenvalue,
\[
    \widehat{C} = C + \max\bigl(\delta\lambda_{\max}(C),\, \varepsilon\bigr) I,
\]
which caps the condition number of $\widehat{C}$ at $(1+\delta)/\delta \approx 1/\delta$.
We apply the same relative damping to the diagonal preconditioners $P$ and $Q$.
Since the normalization in line~\ref{ln:pqnorm} sets their largest entry to
one, the damped matrices are simply $P+\delta I$ and $Q+\delta I$.

\subsection{Spectral-Norm Estimation}
\label{app:smax}
\Cref{alg:ours} requires repeated estimates of spectral norms.  
We estimate each $\norm{M}_2$ by warm-started power iteration on the smaller of the two Gram matrices $MM^\top$ and $M^\top M$ (\Cref{alg:smax}), caching the leading vector across optimizer steps so it starts near the top singular vector. A cold or stale start
can underestimate $\norm{M}_2$, and since the optimizer divides by this estimate, that would risk destabilizing training.  We therefore return the larger of the power-iteration value and the maximum row norm of $M$, which does not depend on the cached start vector. We use $K_{\mathrm{pow}} = 8$ iterations.

\begin{algorithm}[ht]
\caption{Guarded spectral-norm estimate of $\norm{M}_2$}
\label{alg:smax}
\begin{algorithmic}[1]
\Require $M\in\mathbb{R}^{m\times n}$, $m\le n$ (else $M^\top$); cached $v\in\mathbb{R}^m$; iterations $K_{\mathrm{pow}}$
\State $l \gets \max_i\norm{M_{i,:}}_2$
\If{$v$ is invalid} \Comment{absent, zero, or nonfinite}
  \State $v\gets M\mathbf{1}$
\EndIf
\For{$k=1,\dots,K_{\mathrm{pow}}$} \Comment{iterate the smaller Gram $MM^\top$}
  \State $w\gets M^\top v$;\quad $v\gets Mw/\norm{Mw}_2$
\EndFor
\State \Return $\max\big\{\norm{M^\top v}_2,\ l\big\}$
\end{algorithmic}
\end{algorithm}

\subsection{Matrix Sign and Inverse Square Root via Gram Newton--Schulz}
\label{app:gramns}

Beyond the spectral norm, \Cref{alg:ours} needs the matrix sign $\msign(\cdot)$
and the inverse square root of the $r \times r$ curvature matrix.  Both reduce to
an inverse square root of a small positive semidefinite matrix: the sign is
$\msign(X)=(XX^\top)^{-1/2}X$ (for a full row rank matrix $X$; for a tall $X$, apply to $X^\top$ and transpose the result), and the curvature term is $C^{-1/2}$ directly.  We
compute that inverse square root by the Gram Newton--Schulz iteration
\cite{gram_ns} (\Cref{alg:gramns}), which uses only $r\times r$ matrix
multiplications and needs no eigendecomposition or SVD.

\paragraph{Quintic Newton--Schulz.}
Each iteration applies an odd degree-five polynomial
\begin{equation}\label{e-gram-ns-poly}
    p_t(s)
    =
    a_t s + b_t s^3 + c_t s^5 .
\end{equation}
The coefficients $(a_t,b_t,c_t)$ are the PolarExpress coefficients
\cite{polarexpress}. They are fit so that the composition of the $K$ maps
$p_t$ drives every singular value in $[10^{-3},1]$ toward $1$. We compute
them once and reuse them for every optimizer step.

\paragraph{Gram iteration.}
Rather than iterating on the rectangular matrix, we iterate on its $r\times r$
Gram matrix \cite{gram_ns}.  Let $R_t = Y_t Y_t^\top \in \mathbb{R}^{r\times r}$
be the Gram matrix of the rectangular iterate $Y_t$.  For $t=1,\dots,K$ the
rectangular update and its Gram matrix evolve together in closed form,
\[
\begin{aligned}
    M_t &= a_t I + b_t R_{t-1} + c_t R_{t-1}^2, \\
    Y_t &= M_t Y_{t-1}, \\
    R_t &= Y_tY_t^\top = M_t R_{t-1} M_t ,
\end{aligned}
\]
so the whole iteration runs on $R_t$ alone.  Because $M_t$ is a polynomial in
$R_{t-1}$, it shares the singular vectors of $Y_{t-1}$ and moves only its
singular values; since the coefficients \eqref{e-gram-ns-poly} drive the
normalized singular values to $1$, we have $Y_K \approx \msign(Y_0)$.  The Gram update thus
implements the same quintic Newton--Schulz map on the singular values, at the
cost of only $r\times r$ matrix multiplications.

\paragraph{Shared accumulator.}
The method also maintains an accumulator $Z_t$ collecting the product of the
maps $M_t$: starting from $Z_0=I$, each step sets $Z_t=M_t Z_{t-1}$, so by
induction
\[
    Y_t=Z_tY_0,
    \qquad
    R_t=Z_tR_0Z_t^\top .
\]
Since $R_0\succ0$, $\msign(Y_0)$ has orthonormal rows, so
$R_K=Y_KY_K^\top\approx I$.  Each $M_t$ is a polynomial in $R_{t-1}$ and
hence in $R_0$, so $Z_K$ is symmetric and commutes with $R_0$; then
$R_K=Z_K^2R_0\approx I$ gives $Z_K\approx R_0^{-1/2}$.
\Cref{alg:gramns} initializes $R_0=S/\gamma$, so the returned accumulator
$Z=Z_K$ satisfies
\[
    Z\approx(S/\gamma)^{-1/2}=\sqrt{\gamma}\,S^{-1/2}.
\]
This gives both quantities \Cref{alg:ours} needs:

\begin{itemize}[leftmargin=1.5em,itemsep=2pt,topsep=2pt]
    \item \textbf{Matrix sign.}
    Call \Cref{alg:gramns} with $S=XX^\top$ and $\gamma=\norm{X}_F^2$, giving
    \[
        \msign(X)=(XX^\top)^{-1/2}X\approx ZX/\sqrt{\gamma}.
    \]
    \item \textbf{Inverse square root.}
    Call \Cref{alg:gramns} with the damped curvature $S=\widehat{C}$
    (\Cref{app:damping}) and $\gamma=\Tr(\widehat{C})$, giving
    \[
        \widehat{C}^{-1/2}\approx Z/\sqrt{\gamma}.
    \]
\end{itemize}

\paragraph{Stability and precision.}
We run the Gram Newton--Schulz iteration in fp32, which is inexpensive because
every matrix is only $r\times r$.  We use $K=8$ iterations.

\begin{algorithm}[ht]
\caption{Gram Newton--Schulz iteration}
\label{alg:gramns}
\begin{algorithmic}[1]
\Require positive definite $S \in \mathbb{R}^{r \times r}$; $\gamma \ge \lambda_{\max}(S)$; coefficients $\{(a_t,b_t,c_t)\}_{t=1}^{K}$
\Statex\vspace{-0.5\baselineskip}
\State $R\gets S/\gamma$;\quad $Z\gets I$ \Comment{normalize so $\lambda_{\max}(R)\le 1$}
\For{$t=1,\dots,K$}
  \State $M\gets a_t I+b_t R+c_t R^2$
  \State $Z\gets M Z$;\quad $R\gets M R M$
\EndFor
\State \Return $Z\approx (S/\gamma)^{-1/2}$
\end{algorithmic}
\end{algorithm}

\section{FLOP Overhead per Optimizer Step}
\label{app:flops}
To measure the theoretical overhead of each optimizer step, we count
leading-order matrix-multiplication FLOPs, with one multiply-add counted as two
FLOPs.  The count below is for one
adapted linear layer with frozen base weight $W$ and rank-$r$ LoRA factors
$A,B$,
\[
  W\in\mathbb{R}^{d_{\mathrm{out}}\times d_{\mathrm{in}}},\quad
  B\in\mathbb{R}^{d_{\mathrm{out}}\times r},\quad
  A\in\mathbb{R}^{r\times d_{\mathrm{in}}},
\]
with $r\ll d_{\mathrm{in}},d_{\mathrm{out}}$.  Throughout, $K$ denotes the number
of Newton--Schulz iterations, and $n_{\mathrm{tok}}$ denotes the number of tokens
processed per optimization step.

\paragraph{Forward and backward.}
For a linear
layer $Y=XW^\top$, the forward pass costs
$2d_{\mathrm{in}}d_{\mathrm{out}}n_{\mathrm{tok}}$
FLOPs.
In full finetuning, given the backward gradient $G_Y=\nabla_Y \mathcal{L}$,
the backward pass computes both the input and the weight gradient,
\[
\nabla_X \mathcal{L} = G_Y W,
\qquad
\nabla_W \mathcal{L} = G_Y^\top X,
\]
and costs twice the forward. LoRA freezes the base weight $W$ and does not form
$\nabla_W \mathcal{L}$, so its backward through $W$ is a single forward-equivalent matmul:
\begin{equation}
  C_{\mathrm{fwd}}^{\mathrm{base}}
  =
  C_{\mathrm{bwd}}^{\mathrm{base}}
  =
  2 d_{\mathrm{in}}d_{\mathrm{out}} n_{\mathrm{tok}} .
  \label{eq:flop-fb}
\end{equation}
The frozen base weights thus contribute $4Nn_{\mathrm{tok}}$ FLOPs per step, where $N$ is the total number of frozen adapted-layer parameters, versus the usual $6Nn_{\mathrm{tok}}$ for full finetuning~\cite{kaplan2020scaling}. The LoRA factors add $r(d_{\mathrm{in}}+d_{\mathrm{out}})$ trainable parameters per layer and FLOPs of order $n_{\mathrm{tok}}r(d_{\mathrm{in}}+d_{\mathrm{out}})$, which we omit from the count, making the overhead fraction below an overestimate.

\paragraph{Optimizer.}
The \methodname{} update (\Cref{alg:ours}) runs once per step and does not
depend on $n_{\mathrm{tok}}$.
We count the operations that scale with
$d_{\mathrm{in}}$ or $d_{\mathrm{out}}$: forming
\[
    C_B=B^\top P B,
    \qquad
    C_A=A Q A^\top,
\]
the preconditioned sign inputs, 
the small-side Gram matrices for the two
matrix signs, applying the accumulated Gram maps to recover the signs,
post-preconditioning the signs, and computing the preconditioner updates
\[
    \diag(G_A^\top C_B^{-1}G_A),
    \qquad
    \diag(G_B C_A^{-1}G_B^\top).
\]
With diagonal $P, Q$, these operations contribute
$12r^2(d_{\mathrm{in}}+d_{\mathrm{out}})$
leading FLOPs.

The Newton--Schulz iterations themselves run in the small $r\times r$ Gram
space.  One Gram Newton--Schulz iteration costs $8r^3$ FLOPs in the notation of
\Cref{app:gramns}: $2r^3$ to form $R^2$, $2r^3$ for $Z\leftarrow MZ$, and $4r^3$ for
$R\leftarrow MRM$.  Each layer uses four calls of $K$ iterations each (two matrix
signs, two curvature inverse square roots), so the total small-matrix work is $32Kr^3$.
Ignoring lower-order elementwise operations, diagonal scalings, and spectral-norm
power iterations, the leading optimizer cost is
\begin{equation}
  C_{\mathrm{opt}}
  =
  12r^2(d_{\mathrm{in}}+d_{\mathrm{out}})
  +
  32Kr^3 .
  \label{eq:flop-opt}
\end{equation}

\paragraph{Overhead fraction.} Dividing \eqref{eq:flop-opt} by \eqref{eq:flop-fb} gives the
per-layer optimizer FLOP overhead 
\begin{equation}
  \frac{C_{\mathrm{opt}}}{C_{\mathrm{fwd}}^{\mathrm{base}}
  +C_{\mathrm{bwd}}^{\mathrm{base}}}
  =
  \frac{
    12r^2(d_{\mathrm{in}}+d_{\mathrm{out}})
    +
    32Kr^3
  }{
    4d_{\mathrm{in}}d_{\mathrm{out}}n_{\mathrm{tok}}
  }
  =
  \frac{3r^2}{n_{\mathrm{tok}}}
  \left(
    \frac1{d_{\mathrm{in}}}
    +
    \frac1{d_{\mathrm{out}}}
  \right)
  +
  \frac{
    8Kr^3
  }{
    d_{\mathrm{in}}d_{\mathrm{out}}n_{\mathrm{tok}}
  } .
  \label{eq:flop-ratio}
\end{equation}
For fixed model dimensions, rank, and Newton--Schulz iteration count, the
optimizer FLOP fraction decays as $1/n_{\mathrm{tok}}$.
The first term is
governed by the smaller of the two layer dimensions. The ratio of the second term to the
first is $8Kr/(3(d_{\mathrm{in}}+d_{\mathrm{out}}))$, so the Newton--Schulz term matters
most on the square attention matrices and least on the wide feed-forward matrices.
The whole-model overhead averages these fractions with weights proportional to layer size,
so it is set mainly by the large feed-forward matrices, which have the smallest fraction.

\paragraph{Benefits of Gram Newton--Schulz.} The $32Kr^3$ cost of the Gram Newton--Schulz iteration is the only term that depends
on $K$. The iteration is used to compute the matrix sign and the curvature inverse square
roots on the small $r\times r$ Gram matrix rather than on the tall $d\times r$ factor
(\Cref{app:gramns}), saving a factor of about $d/(2r)$ per iteration. Iterating on the factor
instead would make the optimizer step roughly $2\times$ costlier, increasing its share of
the wall-clock time.

\section{Factor Self-Balancing}
\label{app:gauge}
The update size $\rho=\eta/(\norm{A}_2+\norm{B}_2)$~\eqref{eq:ours-rho} depends on the factorization of the merged product $BA$. A rescaling $(A,B)\mapsto(cA,c^{-1}B)$ with $c>0$ preserves $BA$, and hence the layer's output, while changing the denominator to
\[
c\norm{A}_2+c^{-1}\norm{B}_2 \;\ge\; 2\sqrt{\norm{A}_2\norm{B}_2}.
\]
Equality holds when the two rescaled norms are equal, so an imbalanced factorization of the same product takes smaller steps. LoRA-Muon raises this concern for Spectron-style update sizes \cite{loramuon}, but in our runs the factors stay near balance. Starting from $B=0$, the ratio $\norm{B}_2/\norm{A}_2$ climbs and settles near $1$ across ranks (\Cref{fig:gauge}).

\begin{figure}[H]
  \centering
  \includegraphics[width=0.52\linewidth]{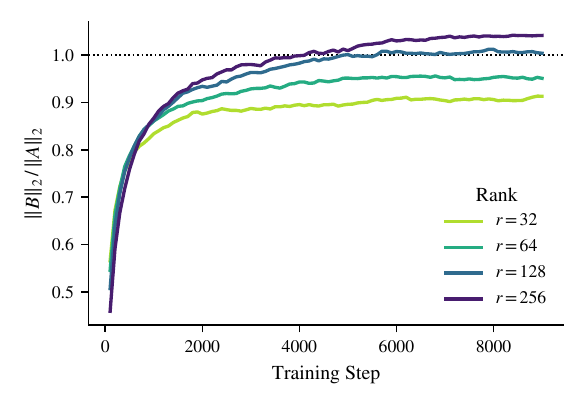}
  \caption{\textbf{Factor norm self-balancing in \methodname{}.} Spectral-norm ratio $\norm{B}_2/\norm{A}_2$ over training steps for Llama-3.2-1B finetuned on the math dataset at each rank. The ratio settles near $1$.}
  \label{fig:gauge}
\end{figure}

\section{Additional Learning Curves}
\label{app:curves}
\Cref{tab:breadth} shows the step and wall-clock speedups over Adam across model families on code and math.
\Cref{fig:curves} complements this with the evaluation loss over training steps: rows~(a)
and~(b) span the model families on code and math, and row~(c) the rank sweep. Each panel
shows Adam and \methodname{} at their optimal learning rates.

\begin{figure}[H]
  \centering
  \includegraphics[width=0.95\linewidth]{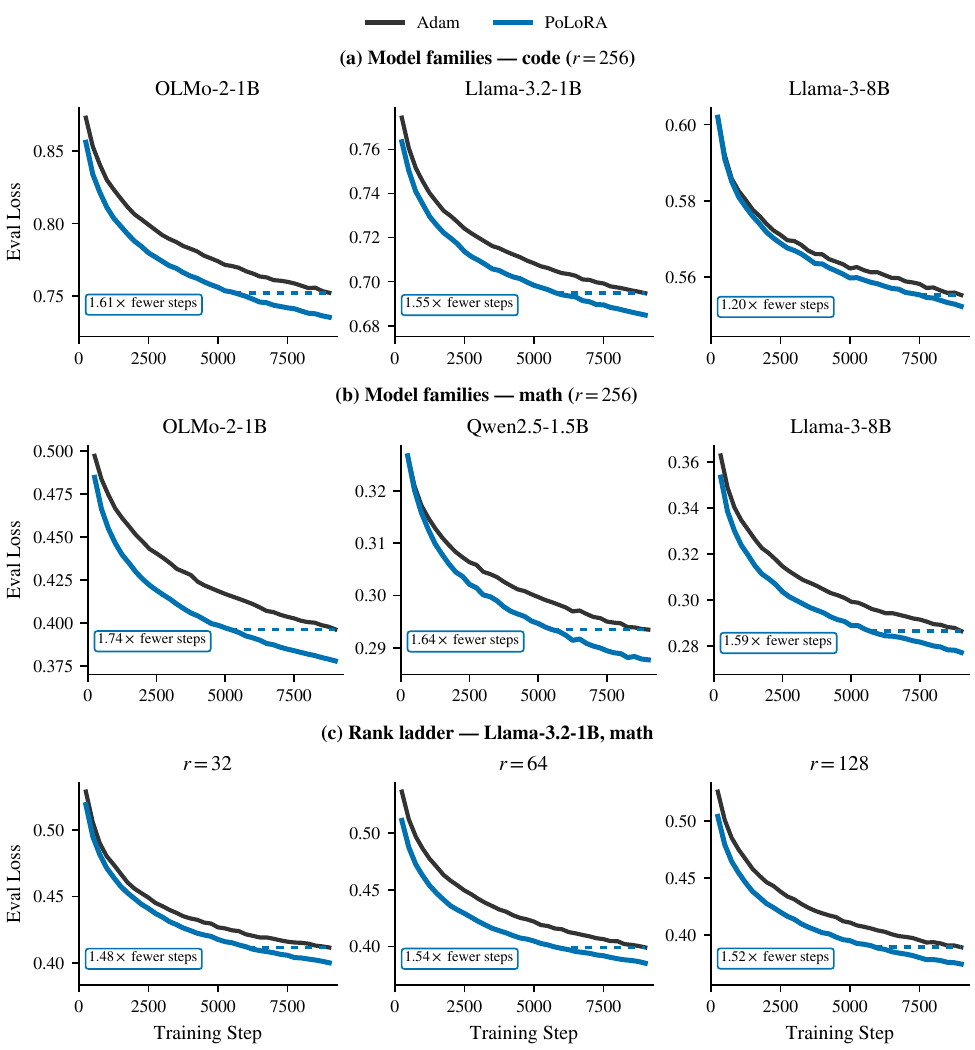}
  \caption{\textbf{\methodname{} speedup over Adam across settings.} Evaluation loss over training steps, Adam vs \methodname{} at each setting's best learning rate. \textbf{(a)}~model families finetuned on code and \textbf{(b)}~on math, at rank $r = 256$; \textbf{(c)}~the rank sweep on Llama-3.2-1B (math). The dashed span marks the steps \methodname{} saves in reaching Adam's final loss. Settings already drawn as curves in the main text (\Cref{fig:hero,fig:ablation,fig:ood}) are not repeated.}
  \label{fig:curves}
\end{figure}

\stopcontents[app]
\end{document}